\newcommand{\ttn}{\mathbf{\tilde{n}}}
\newcommand{\tn}{\mathbf{n}}
\newcommand{\bx}[0]{{\mathbf{x}}}
\newcommand{\bv}[0]{{\mathbf{h}}}   
\newcommand{\bK}[0]{{\mathbf{K}}}
\newcommand{\bS}[0]{{\mathbf{S}}}
\newcommand{\bG}[0]{{\mathbf{G}}}
\newcommand{\bO}[0]{{\mathbf{O}}}
\newcommand{\bR}[0]{{\mathbf{R}}}
\newcommand{\bM}[0]{{\mathbf{M}}}
\newcommand{\bB}[0]{{\mathbf{B}}}
\newcommand{\bE}[0]{{\mathbf{E}}}
\newcommand{\bI}[0]{{\mathbf{I}}}
\newcommand{\bb}[0]{{\mathbf{b}}}
\newcommand{\bX}[0]{{\mathbf{X}}}
\newcommand{\bff}[0]{{\mathbf{f}}}
\newcommand{\br}{\mathbf{r}}
\newcommand{\bT}[0]{{\mathbf{T}}}
\newcommand{\bN}{\mathbf{N}}
\newcommand{\bA}[0]{{\mathbf{A}}}
\newtheorem{theorem}{Theorem}
\newtheorem{corollary}{Corollary}
\newtheorem{proof}[theorem]{Proof}
\newcommand{\qed}{\nobreak \ifvmode \relax \else
      \ifdim\lastskip<1.5em \hskip-\lastskip
      \hskip1.5em plus0em minus0.5em \fi \nobreak
      \vrule height0.75em width0.5em depth0.25em\fi}
\begin{document}
%
\title{A Closed-Form Solution to Tensor Voting: Theory and Applications}
%
%
%
%

\author{Tai-Pang~Wu,~Sai-Kit~Yeung,~Jiaya~Jia,~Chi-Keung~Tang,~\and
G\'{e}rard Medioni
\IEEEcompsocitemizethanks{\IEEEcompsocthanksitem T.-P.~Wu  is with the Enterprise and Consumer Electronics Group,
Hong Kong Applied Science and Technology Research Institute (ASTRI), Hong Kong. E-mail: tpwu@astri.org.
\IEEEcompsocthanksitem S.-K.~Yeung and C.-K.~Tang are with the Department of Computer Science and Engineering, Hong Kong University of Science and Technology, Clear Water Bay, Hong Kong.
E-mail: \{saikit,cktang\}@cse.ust.hk.
\IEEEcompsocthanksitem J. Jia is with the Department of Computer Science and Engineering, Chinese University of Hong Kong, Shatin, Hong Kong.
E-mail: leojia@cse.cuhk.edu.hk 
\IEEEcompsocthanksitem G.~Medioni is with the Department of Computer Science,
University of Southern California, USA. E-mail: medioni@usc.edu.
}%
}

%
%

\markboth{Submitted to IEEE Transactions on Pattern Analysis and Machine Intelligence}%
{Wu \MakeLowercase{\textit{et al.}}: Tensor Voting Revisited: A Closed-Form Solution for Robust Parameter Estimation via Expectation-Maximization}
%



\IEEEcompsoctitleabstractindextext{%
\begin{abstract}

We prove a closed-form solution to tensor voting ({\bf CFTV}):
given a point set in any dimensions, our closed-form solution provides
an exact, continuous and efficient algorithm for computing a
structure-aware tensor that simultaneously achieves salient structure
detection and outlier attenuation. Using CFTV, we prove the
convergence of tensor voting on a Markov random field (MRF), thus
termed as {\bf MRFTV}, where the structure-aware tensor at each 
input site reaches a stationary state upon convergence in 
structure propagation.
We then embed structure-aware tensor into expectation maximization (EM)
for optimizing a single linear structure to achieve efficient and robust
parameter estimation. Specifically, our {\bf EMTV} algorithm optimizes
both the tensor and fitting parameters and does
not require random sampling consensus typically used in existing
robust statistical
techniques.  We performed quantitative evaluation on its accuracy and robustness,
showing that EMTV performs better than the original TV
and other state-of-the-art techniques
in fundamental matrix estimation for multiview stereo matching.
The extensions of CFTV and EMTV for extracting multiple and nonlinear structures are underway.
An addendum is included in this arXiv version.


\end{abstract}

\begin{IEEEkeywords}
Tensor voting,
closed-form solution,
structure inference,
parameter estimation,
multiview stereo.
\end{IEEEkeywords}
}

\maketitle

\IEEEdisplaynotcompsoctitleabstractindextext

%
\IEEEpeerreviewmaketitle

\section{Introduction}
\label{sec:intro}
\IEEEPARstart{T}{his} paper reinvents tensor voting~\cite{book_with_names}
for robust computer vision, by proving a closed-form solution to computing 
an {\em exact
structure-aware tensor} after {\em data communication} in a feature space
of any dimensions, where the goal is salient structure inference from 
noisy and corrupted data.

To infer structures from noisy data corrupted by outliers, in tensor
voting, input points communicate among themselves subject to proximity and
continuity constraints. Consequently, each point is
aware of its structure saliency via a structure-aware tensor.
Structure refers to surfaces,
curves, or junctions if the feature space is three dimensional where
a structure-aware tensor can be visualized as an ellipsoid: if
a point belongs to a smooth surface, the resulting ellipsoid
after data communication resembles a stick pointing along the surface normal;
if a point lies on a curve the tensor resembles a plate,
where the curve tangent is perpendicular to the plate tensor;
if it is a point junction where surfaces intersect, the tensor will be like
a ball. An outlier is characterized by a set of inconsistent votes it receives
after data communication.

We develop in this paper a closed-form solution to tensor voting (CFTV),
which is applicable to the special as well as general theory of tensor
voting.  This paper focuses on the {\em special} theory,  where
the above data communication is data driven without using constraints
other than proximity and continuity. The special theory, sometimes
coined as ``first voting pass,'' is applied to process raw input data
to detect structures and outliers. In addition to structure detection
and outlier attenuation, in the {\em general} theory of tensor voting,
tensor votes are propagated along preferred directions to achieve data
communication when such directions are available, typically after
the first pass, such that useful tensor votes are reinforced whereas
irrelevant ones are suppressed.

\begin{figure*}[t]
\begin{center}
\includegraphics[width=0.99\linewidth]{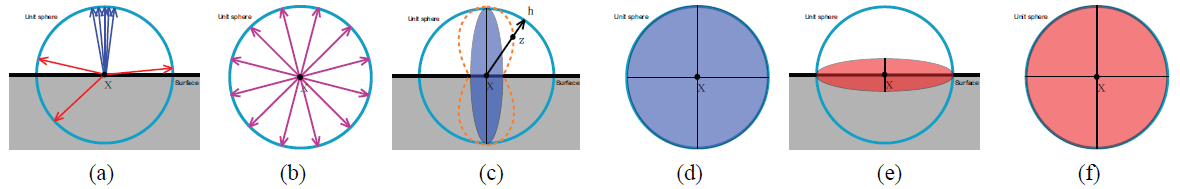}
\end{center}
\caption{\small
Inlier/outlier and tensor inverse illustration.
(a) The normal votes received
at a surface point cast by points in $\bx$'s neighborhood.
Three salient outliers are present.
(b)
For a non-surface point,
there is no preference to any normals.
(c) The structure-aware tensor induced by the normal observations in (a),
which is represented by a $d$-D ellipsoid, where $d \ge 2$.
The orange curve (dashed curve) represents the variance produced
along all possible directions.
(d) The structure-aware tensor collected after collecting the 
received votes in (b).
(e) and (f) correspond to the inverse of (c) and (d), respectively.
}
\label{fig:inlier_outlier_illustration}
\vspace{-0.2in}
\end{figure*}

Expressing tensor voting in a single and compact equation,
or a closed-form solution, offers many advantages:
not only an exact and efficient solution can be achieved with
less implementation effort for salient structure detection
and outlier attenuation, formal and useful mathematical
operations such as differential calculus can be applied
which is otherwise impossible using the original tensor
voting procedure. Notably, we can prove the convergence
of tensor voting on Markov random fields (MRFTV) where a 
structure-aware tensor at each input site achieves a 
stationary state upon convergence. 

Using CFTV, 
we contribute a mathematical derivation based on expectation 
maximization (EM) that
applies the exact tensor solution for extracting the most salient
linear structure, despite that the input data is highly corrupted.
Our algorithm is called EMTV, which optimizes both the tensor and
fitting parameters upon convergence and does not require random
sampling consensus typical of existing robust statistical techniques.
The extension to extract salient multiple and nonlinear structures is
underway.  

While the mathematical derivation may seem involved, our main
results for CFTV, MRFTV and
EMTV, that is, 
Eqns~(\ref{eqn:cal_S}), (\ref{eqn:cal_S_inv}), 
(\ref{eqn:energy_update_rule}), (\ref{eqn:sor_update_rule}),
(\ref{eqn:e_step_update}), and (\ref{eqn:m_step_update}).
have rigorous mathematical foundations, are applicable
to any dimensions, produce more robust and accurate results as demonstrated
in our qualitative and quantitative evaluation using challenging
synthetic and real data, but on the other hand are easier to implement.
The source codes accompanying this paper are available in the
supplemental material.

\section{Related Work}
\label{sec:related}
While this paper is mainly concerned with tensor voting, we provide
a concise review on robust estimation and expectation maximization.

\noindent {\bf Robust estimators.}
Robust techniques are widely used and an excellent review
of the theoretical foundations of robust
methods in the context of computer vision can be found
in~\cite{meer_bookchapter}.

The Hough transform~\cite{hough_reference} is a robust voting-based
technique operating in a parameter space capable of extracting multiple
models from noisy data.
Statistical Hough transform~\cite{dahyot_pami2009} can be used
for high dimensional spaces with sparse observations.
Mean shift~\cite{meanshift} has been widely used since its introduction
to computer vision for robust feature space analysis. The Adaptive mean
shift~\cite{adaptive_meanshift} with variable bandwidth in high dimensions
was introduced in texture classification and has since been applied to other
vision tasks. Another popular robust method in computer vision
is in the class of random sampling consensus (RANSAC) procedures
\cite{fischler_acm81} which have spawned a lot of follow-up work
(e.g., optimal randomized RANSAC~\cite{chum_pami08}).

Like RANSAC~\cite{fischler_acm81}, robust estimators
including the LMedS~\cite{rousseeuw84} and the M-estimator~\cite{huber81}
adopted a statistical approach. The
LMedS, RANSAC and the Hough transform can be expressed
as M-estimators with auxiliary scale~\cite{meer_bookchapter}.
The choice of scales
and parameters related to the noise level are major issues.
Existing works on robust scale estimation use random
sampling~\cite{subbarao_meer06}, or operate on
different assumptions (e.g., more than 50\% of the data should be
inliers~\cite{rousseeuw_1987}; inliers have a Gaussian
distribution~\cite{lee_1998}). Among them, the Adaptive Scale Sample
Consensus (ASSC) estimator~\cite{wang_and_suter} has shown the best
performance where the estimation process requires no free parameter as input.
Rather than using a Gaussian distribution to model inliers, the authors of 
\cite{wang_and_suter} proposed to use a two-step scale estimator (TSSE) to refine the model scale:
first, a non-Gaussian distribution is used to model inliers where
local peaks of density are found by mean shift~\cite{meanshift};
second, the scale parameter is estimated by a median scale estimator with
the estimated peaks and valleys. 
On the other hand, the projection-based M-estimator (pbM)~\cite{chen_meer_iccv03}, 
an improvement made on the M-estimator, uses a Parzen window for scale
estimation, so the scale parameter is automatically found by
searching for the normal direction (projection direction)
that maximizes the sharpest peak of the density. This
does not require an input scale from the user. While these
recent methods can tolerate more outliers, most of them still
rely on or are based on RANSAC and a number of random
sampling trials is required to achieve the desired robustness.

To reject outliers, a multi-pass method using $L_\infty$-norms was
proposed to successively detect outliers which are characterized by
maximum errors~\cite{sim_hartley_cvpr06}.

\noindent {\bf Expectation Maximization.}
EM has been used in handling missing
data and identifying outliers in robust computer
vision~\cite{computer_vision_book},
and its convergence properties were studied~\cite{embook}.
In essence, EM consists of two steps~\cite{bilmes97gentle,embook}:
\begin{enumerate}
\item {\bf E-Step}. Computing an expected value for the complete data set using 
incomplete data and the current estimates of the parameters.
\item {\bf M-Step}. Maximizing the complete data log-likelihood using the 
expected value computed in the E-step.
\end{enumerate}

EM is a powerful inference algorithm, but it is also well-known from 
\cite{computer_vision_book} that:
1) initialization is an issue because EM can get stuck in poor local minima,
and 2) treatment of data points with small expected weights requires great
care. They should not be regarded as negligible, as their aggregate effect can
be quite significant.  In this paper we initialize EMTV using structure-aware
tensors obtained by CFTV. As we will demonstrate, such initialization not 
only allows the EMTV algorithm to converge quickly (typically within 20
iterations) but also produces accurate and robust solution in parameter
estimation and outlier rejection.

\if 0

\section{Exactness and Optimality}
\label{sec:optimality}

Focusing on the special tensor voting theory, the problem is 
to compute a structure-aware tensor at each input site. 

Rather than seeking an exact solution to the above problem, pioneering
as well as recent works on tensor communication used a discrete voting algorithm
as an approximation. Such algorithms have been criticized for lack of a
mathematical foundation. Concerns were even raised on the ``optimality''
of such voting procedure. Note, for example, in the conclusion
of~\cite{meer_bookchapter} that tensor voting was not discussed
because it is ``somewhat ad-hoc'' and does not seem to have a
``common statistical framework with solid theoretical (mathematical)
foundations $\dots$ of robust estimators''.  Despite works such as
\cite{franken_eccv06,jmlr} on improving the speed and accuracy,
they either are discrete approximations or require a complex voting
procedure without precise mathematical underpinnings.

In our opinion, tensor voting is different from optimization
because it is {\em not} defined by an objective function.
Tensor voting is concerned with data or tensor communication and therefore
it does not make sense to impose the concept of ``optimality''
on tensor voting as was done in other optimization problems.

But then, does it mean that tensor voting is at odds with optimization
techniques widely used in computer vision? Previously, the answer was
{\em yes} because of the complex discrete voting procedure using discrete
voting fields which produce approximate solutions to tensor voting.
In contrast, our answer is {\em no}: by embedding the closed-form
solution to tensor voting in an objective function, the resulting
function can be differentiated and thus the notion of optimality can
be unambiguously defined: in the EMTV algorithm for parameter estimation,
it makes sense to characterize the optimality of the estimated parameters
because an objective function is defined. In the sequel, we will prove 
the convergence of tensor voting in outlier rejection, that is, 
all tensors will achieve a stationary state no matter how many 
times tensor voting is applied after convergence.

In the following, we first introduce the closed-form solution to tensor
voting (CFTV), which contributes an exact solution for computing
structure-aware tensors. Then, based on the unique properties of
CFTV, we derive a novel EM algorithm based on tensor voting, EMTV,
for robust parameter estimation where the optimality of the estimated
parameters can be precisely measured using an objective function.

\fi

\section{Data Communication}
\label{sec:smooth_connection}
In the tensor voting framework, a data point, or voter, communicates with
another data point, or vote receiver,  subject to proximity and continuity
constraints, resulting in a tensor vote cast from the voter to the vote
receiver (Fig.~\ref{fig:inlier_outlier_illustration}). In the following, we use $\tn$ to denote a unit voting stick tensor,
${\bf v}$  to denote a stick tensor vote received. Stick tensor vote 
may not be unit vectors when they are multiplied by vote strength.
These stick tensors are building elements of a structure-aware tensor vote.

Here, we first define a decay function $\eta$ to encode the proximity
and smoothness constraints (Eqns~\ref{eqn:neighbor_weighting} and \ref{eqn:neighbor_weighting_Gaussian}).
While similar in effect to the decay function used in the original
tensor voting, and also to the one used in~\cite{franken_eccv06}
where a vote attenuation function is
defined to decouple proximity and curvature terms, our modified
function, which also differs from that in~\cite{jmlr},
enables a closed-form solution for tensor voting without resorting
to precomputed discrete voting fields.

\begin{figure}[t]
\begin{center}
\includegraphics[width=0.6\linewidth]{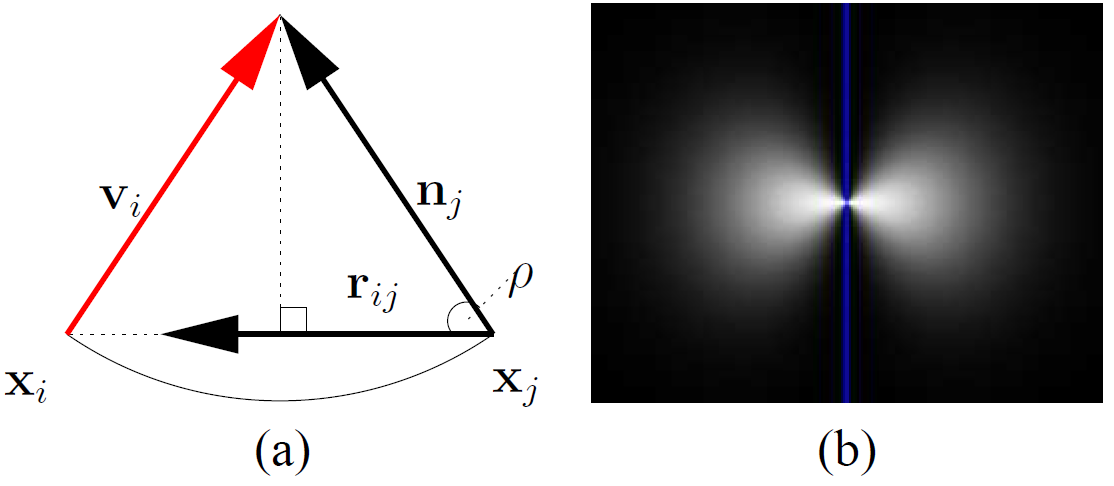}
\end{center}
\caption{\small 
(a)  The normal vote ${\mathbf v}_i$ received at $\bx_i$ using an arc of the
osculating circle between $\bx_i$ and $\bx_j$, assuming the normal voter
at $\bx_j$ is $\tn_j$, where $\tn_j$, ${\mathbf r}_{ij}$, and ${\mathbf v}_i$
are unit vectors in this illustration.
(b) Plot of Eqn.~(\ref{eqn:neighbor_weighting}) in 2D.
}
\label{fig:osculating_arc}
\label{fig:weighted_color_plot}
\vspace{-0.2in}
\end{figure}
Refer to Fig.~\ref{fig:osculating_arc}.
Consider two points $\bx_i \in \Bbb{R}^d$ and $\bx_j \in \Bbb{R}^d$
(where $d > 1$ is the dimension) that are connected by some smooth
structure in the feature/solution space.
Suppose that the unit normal $\tn_j$ at $\bx_j$ is known.
We want to generate at $\bx_i$ a normal (vote) ${\mathbf v}_i$
so that we can calculate $\mathbf{K}_i \in \Bbb{R}^d \times \Bbb{R}^d$, where $\mathbf{K}_i$ is
the structure-aware tensor at $\bx_i$ in the presence of
 ${\mathbf n}_j$ at $\bx_j$. In tensor voting, a 
structure-aware tensor is a second-order symmetric tensor 
which can be visualized as an ellipsoid.

While many possibilities exist, the unit
direction ${\mathbf v}_i$ can be derived by fitting an arc of the
osculating circle between the two points.  Such an arc keeps
the curvature constant along the hypothesized connection, thus
encoding the smoothness constraint.
$\mathbf{K}_i$ is then given by ${\mathbf v}_i {\mathbf v}_i^T$ multiplied by
$\eta(\bx_i, \bx_j, \tn_j)$ defined as:
\begin{equation}
\label{eqn:neighbor_weighting}
\eta( \bx_i, \bx_j, \tn_j ) = c_{ij} ( 1 - ( \mathbf{r}_{ij}^T \tn_j )^2  )
\end{equation}
where
\begin{equation}
\label{eqn:neighbor_weighting_Gaussian}
c_{ij} = \exp( - \frac{||\bx_i - \bx_j ||^2 }{ \sigma_d })
\end{equation}
is an exponential function using Euclidean distance for attenuating 
the strength based on proximity.
$\sigma_d$ is the size of local neighborhood (or the scale parameter,
the only free parameter in tensor voting).

In Eqn.~(\ref{eqn:neighbor_weighting}), 
${\mathbf r}_{ij} \in \Bbb{R}^d$ is a 
unit vector at $\bx_j$ pointing to $\bx_i$, and
$1 - ( {\mathbf r}_{ij}^T \tn_j )^2$ is a squared-sine
function\footnote{
$\sin^2 \rho = 1 - \cos^2 \rho$, where $\cos^2 \rho= ({\mathbf
r}_{ij}^T \tn_j)^2$ and $\rho$ is the angle between
${\mathbf r }_{ij}$ and $\tn_j$.} for attenuating the
contribution according to curvature. Similar to the original
tensor voting framework, Eqn.~(\ref{eqn:neighbor_weighting}) favors
nearby neighbors that produce small-curvature connections,
thus encoding the smoothness constraint. A plot of the 2D version of
Eqn.~(\ref{eqn:neighbor_weighting}) is shown in
Fig.~\ref{fig:weighted_color_plot}(b), where $\bx_j$ is located at
the center of the image and ${\mathbf n}_j$ is aligned with the blue
line. The higher the intensity, the higher the value
Eqn.~(\ref{eqn:neighbor_weighting}) produces at a given pixel location.

Next, consider the general case where the normal $\mathbf{n}_j$ at 
$\bx_j$ is unavailable. Here, let $\bK_j$ at $\bx_j$ be any 
second-order symmetric tensor, which is typically initialized as 
an identity matrix if no normal information is available. 

To compute $\mathbf{K}_i$ given $\bK_j$, we consider equivalently 
the set of all possible unit normals $\{ \tn_{\theta j} \}$ associated with 
the corresponding length $\{ \tau_{\theta j} \}$ which make up 
$\bK_j$ at $\bx_j$, where  $\{ \tn_{\theta j} \}$ and $\{ \tau_{\theta j} \}$ 
are respectively indexed by all possible directions $\theta$. 
Each $\tau_{\theta j} \tn_{\theta j}$ postulates a normal vote 
${\bf v}_{\theta}(\bx_i,\bx_j)$ at $\bx_i$ under the same
smoothness constraint prescribed by the corresponding arc of the 
osculating circle as illustrated in Fig.~\ref{fig:general}.

Let $\bS_{ij}$ be the second-order symmetric {\bf tensor vote} obtained
at $\bx_i$ due to this complete set of normals at $\bx_j$ defined above. 
We have
\begin{equation}
\label{eqn:S_integration}
\bS_{ij} = \int_{ \mathbf{N}_{\theta j} \in \nu } {\bf v}_{\theta}(\bx_i, \bx_j )
{\bf v}_{\theta}(\bx_i, \bx_j)^T \eta(\bx_i, \bx_j, \tn_{\theta j}) d\mathbf{N}_{\theta j}
\end{equation}
where
\begin{equation}
\mathbf{N}_{\theta j} = \tn_{\theta j} \tn_{\theta j}^T
\end{equation}
and $\nu$ is the space containing all possible $\mathbf{N}_{\theta
j}$. For example, if $\nu$ is 2D, the complete set of unit normals
$\tn_{\theta}$ describes a unit circle. If $\nu$ is 3D, the complete
set of unit normals $\tn_{\theta}$ describes a unit sphere\footnote{
The domain of integration $\nu$ represents the space of stick tensors given 
by $\tn_{\theta j}$. Note that $d > 1$;  alternatively, it can be 
understood by expressing $\mathbf{N}_{\theta j}$ using polar coordinates; 
and thus in $N$ dimensions, $\theta = (\phi_1, \phi_2, \cdots, \phi_{n-1})$. 
It follows naturally we do not use $\theta$  to define the integration
domain, because rather than simply writing

$ \int_{ \mathbf{N}_{\theta j} \in \nu } \cdots d  \mathbf{N}_{\theta j} $,
   it would have been

$  \int_{\phi_1} \int_{\phi_2}  \cdots \int_{\phi_{n-1}}  \cdots  d\phi_{n-1} d\phi_{n-2} \cdots d\phi_1 $
making the derivation of the proof of Theorem 1 more complicated.
}.

\begin{figure}[t]
\begin{center}
\includegraphics[width=0.3\linewidth]{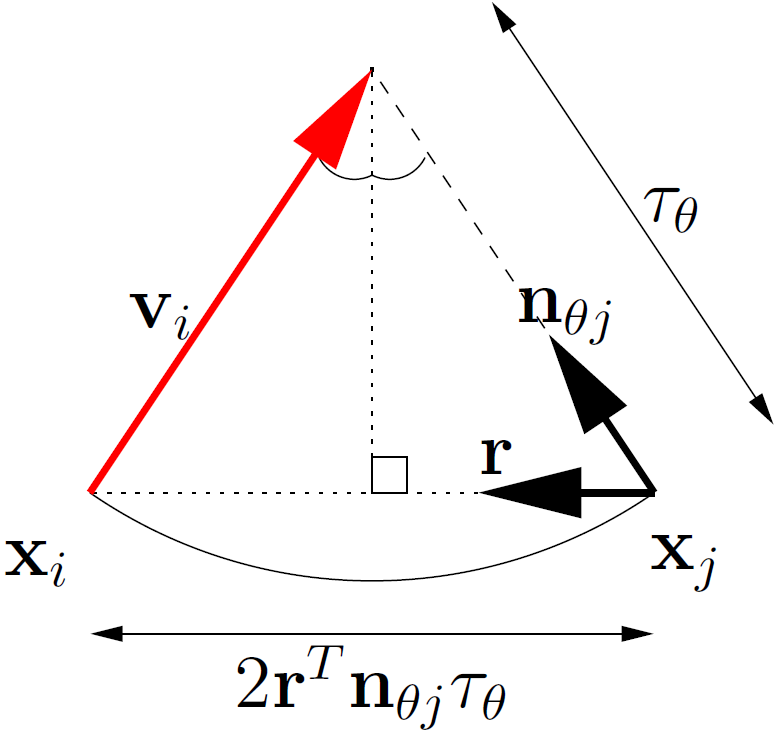} 
\end{center}
\caption{\small
Illustration of Eqn. (\ref{eqn:v}). Normal vote ${\mathbf v}_i = {\mathbf v}_\theta( \bx_i, \bx_j)$ 
received at $\bx_i$ using the arc of the osculating circle between $\bx_i$ and $\bx_j$, 
considering one of the normal voters at $\bx_j$ is $\tn_{\theta j}$. Here,
$\tn_{\theta j}$ and ${\mathbf r}$ are unit vectors.
}
\label{fig:general}
\vspace{-0.2in}
\end{figure}

In a typical tensor voting implementation, Eqn.~(\ref{eqn:S_integration}) is
precomputed as discrete voting fields
(e.g., plate and ball voting fields in 3D tensor voting~\cite{book_with_names}):
the integration is implemented by rotating and summing the
contributions using matrix addition.
Although precomputed once, such discrete approximations
involve uniform and dense sampling of tensor votes
$\tn_{\theta} \tn_{\theta}^T$ in higher dimensions where
the number of dimensions depends on the problem.
In the following section, we will prove a closed-form solution to
Eqn.~(\ref{eqn:S_integration}), which provides an efficient and
exact solution to computing $\bK$ without resorting to discrete
and dense sampling.


\begin{figure*}[t]
\begin{center}
\includegraphics[width=0.99\linewidth]{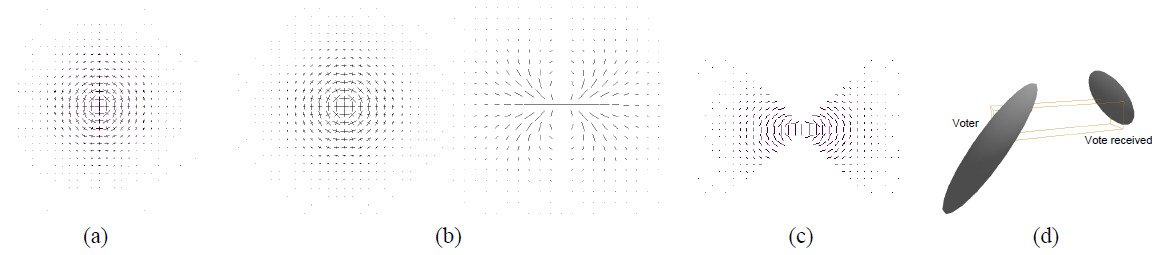} 
\end{center}
\vspace{-0.1in}
\caption{\small (a) {\bf 3D ball voting field}. A slice generated using the closed-form solution Eqn~(\ref{eqn:cal_S}), which has similar tensor orientations (but different tensor strengths) as the ball voting field in~\cite{book_with_names}. 
(b) {\bf 3D plate voting field}. Left: a cut of the voting 
field (direction of $\hat{e}_3$
normal to the page). Right: a cut of the same voting field,
showing the $(\lambda_2 - \lambda_3) \hat{e}_3$ component
(i.e., component parallel to the tangent direction). The field is generated by
using Eqn~(\ref{eqn:cal_S}) showing similar tensor orientations as the
plate voting field in~\cite{book_with_names}. 
(c) {\bf 3D stick voting field}. A slice after zeroing out votes lying in
the 45-degree zone as done in~\cite{book_with_names}. The stick tensor
orientations shown in the figure are identical to those in the 3D stick voting field in
\cite{book_with_names}.
(d) Vote computation using
the closed-form solution in one single step by Eqn~(\ref{eqn:cal_S}).
}
\label{fig:illus}
\vspace{-0.2in}
\end{figure*}

\section{Closed-Form Solution}

\begin{theorem}
\emph{(Closed-Form Solution to Tensor Voting)}
\label{thm:cftv}
The tensor vote at $\bx_i$ induced by $\bK_j$ located at $\bx_j$ is given by
the following closed-form solution:
\begin{equation*}
\bS_{ij} = c_{ij} \bR_{ij} \bK_j \bR'_{ij}
\end{equation*}
where $\bK_j$ is a second-order symmetric tensor,
$\bR_{ij} ={\mathbf I} - 2 {\mathbf r}_{ij} {\mathbf r}_{ij}^T$,
$\bR'_{ij} = ({\mathbf I} - \frac{1}{2} {\mathbf r}_{ij} {\mathbf
r}_{ij}^T) \bR_{ij}$, 
${\mathbf I}$ is an identity,
${\mathbf r}_{ij}$ is a unit vector pointing from
$\bx_j$ to $\bx_i$ and $c_{ij} = \exp( - \frac{||\bx_i - \bx_j ||^2 }{ \sigma_d })$
with $\sigma_d$ as the scale parameter.
\end{theorem}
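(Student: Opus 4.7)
The plan is to reduce Eqn.~(\ref{eqn:S_integration}) to closed form by (i) replacing $\mathbf{v}_\theta(\bx_i,\bx_j)$ by an explicit reflection of $\tn_{\theta j}$, and (ii) pulling all $\theta$-independent factors out of the integrand so that only a single scalar-weighted moment of the directional distribution defining $\bK_j$ is left to evaluate.

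First I would derive the vote direction from the osculating-circle construction of Fig.~\ref{fig:general}. The centre of the unique circle through $\bx_i$ and $\bx_j$ with normal $\tn_{\theta j}$ at $\bx_j$ lies on the ray from $\bx_j$ along $\tn_{\theta j}$ and must be equidistant from both points, which after a short calculation (equivalently, an inscribed-angle/reflection argument about the chord) forces the unit normal at $\bx_i$ to be the Householder reflection of $\tn_{\theta j}$ through the hyperplane perpendicular to $\mathbf{r}_{ij}$. Thus $\mathbf{v}_\theta = \bR_{ij}\tn_{\theta j}$ up to an overall sign that is irrelevant to the outer product, so $\mathbf{v}_\theta\mathbf{v}_\theta^T = \bR_{ij}\mathbf{N}_{\theta j}\bR_{ij}$. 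Rewriting Eqn.~(\ref{eqn:neighbor_weighting}) in bilinear form as $\eta = c_{ij}\bigl(1 - \mathbf{r}_{ij}^T\mathbf{N}_{\theta j}\mathbf{r}_{ij}\bigr)$ and factoring out the $\theta$-independent $c_{ij}$ and $\bR_{ij}$ turns Eqn.~(\ref{eqn:S_integration}) into
\begin{equation*}
\bS_{ij} \;=\; c_{ij}\,\bR_{ij}\!\left[\,\int_{\nu}\!\bigl(1-\mathbf{r}_{ij}^T\mathbf{N}_{\theta j}\mathbf{r}_{ij}\bigr)\,\mathbf{N}_{\theta j}\,d\mathbf{N}_{\theta j}\right]\!\bR_{ij}.
\end{equation*}

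The first piece of the bracketed integral is $\bK_j$ by the very decomposition of $\bK_j$ into its constituent sticks described just above Eqn.~(\ref{eqn:S_integration}). The second piece,
\begin{equation*}
I_2 \;=\; \int_{\nu}\bigl(\mathbf{r}_{ij}^T\mathbf{N}_{\theta j}\mathbf{r}_{ij}\bigr)\,\mathbf{N}_{\theta j}\,d\mathbf{N}_{\theta j},
\end{equation*}
is the main obstacle: it is the contraction of a fourth-order direction moment of the measure defining $\bK_j$ against $\mathbf{r}_{ij}\mathbf{r}_{ij}^T$, and it must be collapsed to an expression linear in $\bK_j$. The target identity is $I_2 = \tfrac{1}{2}\bK_j\,\mathbf{r}_{ij}\mathbf{r}_{ij}^T$, because then $\bK_j - I_2 = \bK_j\bigl(\mathbf{I} - \tfrac{1}{2}\mathbf{r}_{ij}\mathbf{r}_{ij}^T\bigr)$ and the outer $\bR_{ij}$-sandwich, together with the definition $\bR'_{ij} = \bigl(\mathbf{I}-\tfrac{1}{2}\mathbf{r}_{ij}\mathbf{r}_{ij}^T\bigr)\bR_{ij}$, immediately delivers the claimed formula. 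To establish the identity I would exploit the paper's convention of writing $\bK_j$ as a continuous weighted superposition of unit sticks on $\nu$, so that $\mathbf{r}_{ij}^T\mathbf{N}_{\theta j}\mathbf{r}_{ij} = (\mathbf{r}_{ij}^T\tn_{\theta j})^2$ appears as a scalar weight which, after a short tensor-algebra manipulation, can be re-absorbed into the first-moment integral contracted with $\mathbf{r}_{ij}\mathbf{r}_{ij}^T$. As a cross-check I would specialise to the canonical cases $\bK_j=\tn\tn^T$ (pure stick), $\bK_j=\mathbf{I}$ (ball), and $\bK_j=\mathbf{I}-\tn\tn^T$ (plate in 3D) and verify that the resulting fields reproduce those shown in Fig.~\ref{fig:illus}, which also serves as a correctness check on the sign and factor conventions.
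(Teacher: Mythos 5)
Your setup matches the paper's: you derive the vote as the Householder reflection $\mathbf{v}_\theta = \bR_{ij}\tn_{\theta j}$ (the paper writes this as $\mathbf{v}_\theta = (\tn_\theta - 2\mathbf{r}(\mathbf{r}^T\tn_\theta))\tau_\theta$), pull $c_{ij}$ and the $\theta$-independent $\bR_{ij}$ out of the integral, and reduce the problem to showing $\bK_j - I_2 = \bK_j(\mathbf{I}-\tfrac{1}{2}\mathbf{r}_{ij}\mathbf{r}_{ij}^T)$ with $I_2 = \int_\nu \tau_\theta^2\,\mathbf{N}_{\theta}\mathbf{r}\mathbf{r}^T\mathbf{N}_{\theta}\,d\mathbf{N}_{\theta}$. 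Up to that point you are reproducing Eqn.~(\ref{eqn:cf_inter}) exactly.

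The gap is that you never prove the identity $I_2 = \tfrac{1}{2}\bK_j\mathbf{r}_{ij}\mathbf{r}_{ij}^T$, which is the entire mathematical content of the theorem; you only name it as a ``target'' and appeal to ``a short tensor-algebra manipulation'' that would re-absorb the scalar weight $(\mathbf{r}^T\tn_\theta)^2$ into the first-moment integral. No such pointwise manipulation exists. As you yourself observe, $I_2$ is a contraction of the \emph{fourth}-order directional moment of the measure defining $\bK_j$, and a fourth moment is not determined by the second moment $\bK_j$ alone: two different stick decompositions with the same $\bK_j$ generally give different values of $\int\tau_\theta^2(\mathbf{r}^T\tn_\theta)^2\mathbf{N}_\theta\,d\mathbf{N}_\theta$. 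A second concrete warning sign: the integrand $\mathbf{N}_\theta\mathbf{r}\mathbf{r}^T\mathbf{N}_\theta = (\mathbf{r}^T\tn_\theta)^2\mathbf{N}_\theta$ is symmetric for every $\theta$, so any term-by-term algebraic rewriting yields a symmetric $I_2$, whereas the claimed value $\tfrac{1}{2}\bK_j\mathbf{r}\mathbf{r}^T$ is not symmetric in general (this is precisely the issue raised in the comments paper and revisited in the addendum, where a symmetric variant $\tfrac{1}{4}(\mathbf{r}\mathbf{r}^T\bK_j + \bK_j\mathbf{r}\mathbf{r}^T)$ is derived instead). The paper closes this step not by tensor algebra but by a formal integration by parts with respect to the tensor-valued variable $\mathbf{N}_\theta$, taking $f(\theta)=\tau_\theta^2\mathbf{N}_\theta$, $g(\theta)=\tfrac{1}{2}\mathbf{r}\mathbf{r}^T\mathbf{N}_\theta^2$, exploiting the idempotency $\mathbf{N}_\theta^q=\mathbf{N}_\theta$ and the representation $\bK_j=\int_\nu\tau_\theta^2\mathbf{N}_\theta\,d\mathbf{N}_\theta$. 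Whatever one thinks of the rigor of that device, your proof is incomplete without either reproducing it or supplying a substitute argument for $I_2$; the verification against the voting fields of Fig.~\ref{fig:illus} is a useful sanity check but not a proof.
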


\begin{proof}
For simplicity of notation, set ${\mathbf r} = {\mathbf
r}_{ij}$, $\tn_{\theta} = \tn_{\theta j}$ and ${\mathbf N}_{\theta}
= {\mathbf N}_{\theta j}$. Now, using the above-mentioned
osculating arc connection,
${\bf v}_{\theta}(\bx_i, \bx_j)$ can be expressed as
\begin{equation}
{\bf v}_{\theta}(\bx_i, \bx_j) = ( \tn_{\theta} - 2 \mathbf{r}
(\mathbf{r}^T \tn_{\theta}) ) \tau_{\theta}
\label{eqn:v}
\end{equation}
Recall that $\tn_{\theta}$ is the unit normal at $\bx_j$ with
direction $\theta$, and that $\tau_{\theta}$ is the length
associated with the normal.  This vector subtraction equation is 
shown in Fig.~\ref{fig:general} where the roles of 
${\bf v}_{\theta}, \tn_{\theta},  \mathbf{r}$, and $\tau_{\theta}$ 
are illustrated.

Let
\begin{equation}
\label{eqn:closed_derived_start} \mathbf{R} = ( \mathbf{I} - 2
\mathbf{r} \mathbf{r}^T ),
\end{equation}
where $\mathbf{I}$ is an identity, we can rewrite
Eqn.~(\ref{eqn:S_integration}) into the following
form:
\begin{equation}
\label{eqn:closed_derive_start} \bS_{ij} = c_{ij}
\int_{\mathbf{N}_{\theta} \in \nu} \tau_{\theta}^2 \mathbf{R}
\tn_{\theta} \tn_{\theta}^T \mathbf{R}^T ( 1 - (\tn_{\theta}^T
\mathbf{r})^2  ) d\mathbf{N}_{\theta} .
\end{equation}

Following the derivation:
\begin{eqnarray}
\nonumber \bS_{ij} &=& c_{ij} \int_{\mathbf{N}_{\theta} \in \nu} \tau_{\theta}^2 \mathbf{R} \tn_{\theta} ( 1 - (\tn_{\theta}^T \mathbf{r})^2  )  \tn_{\theta}^T \mathbf{R}^T d\mathbf{N}_{\theta} \\
\nonumber  &=& c_{ij} \mathbf{R} \left( \int_{\mathbf{N}_{\theta} \in \nu} \tau_{\theta}^2 \tn_{\theta} ( 1 - \tn_{\theta}^T \mathbf{r} \mathbf{r}^T \tn_{\theta} )  \tn_{\theta}^T d\mathbf{N}_{\theta} \right) \mathbf{R}^T\\
\nonumber &=& c_{ij} \mathbf{R} \left( \int_{\mathbf{N}_{\theta} \in \nu} \tau_{\theta}^2 \mathbf{N}_{\theta} - \tau_{\theta}^2 \mathbf{N}_{\theta} \mathbf{r} \mathbf{r}^T \mathbf{N}_{\theta} d\mathbf{N}_{\theta} \right) \mathbf{R}^T \\
\label{eqn:cf_inter} &=& c_{ij} \mathbf{R} \left(  \mathbf{K}_j -
\int_{\mathbf{N}_{\theta} \in \nu} \tau_{\theta}^2
\mathbf{N}_{\theta} \mathbf{r} \mathbf{r}^T \mathbf{N}_{\theta}
d\mathbf{N}_{\theta} \right) \mathbf{R}^T
\end{eqnarray}
The integration can be solved by integration by parts. Let
$f(\theta) = \tau_{\theta}^2 \mathbf{N}_{\theta}$, $f'(\theta) =
\tau_{\theta}^2 \mathbf{I}$, $g(\theta) = \frac{1}{2} \mathbf{r}
\mathbf{r}^T \mathbf{N}_{\theta}^2 $ and $g'(\theta) = \mathbf{r}
\mathbf{r}^T \mathbf{N}_{\theta} $, and note that $
\mathbf{N}_{\theta}^q = \mathbf{N}_{\theta}$ for all $q \in
\Bbb{Z}^+$ 
(see this footnote\footnote{The derivation is as follows $\mathbf{N}_{\theta}^q = \tn_{\theta}  \tn_{\theta}^T  \tn_{\theta}  \tn_{\theta}^T \cdots  \tn_{\theta}  \tn_{\theta}^T =  \tn_{\theta} \cdot 1 \cdot 1 \cdots 1 \cdot  \tn_{\theta}^T =  \tn_{\theta}  \tn_{\theta}^T = \mathbf{N}_{\theta}$.}), 
and ${\mathbf K}_j$, 
in the most general form, can be
expressed as a generic tensor $\int_{\mathbf{N}_{\theta} \in \nu}
\tau_{\theta}^2 \mathbf{N}_{\theta} d\mathbf{N}_{\theta}$. So we
have
\begin{eqnarray}
\nonumber & &\int_{\mathbf{N}_{\theta} \in \nu} \tau_{\theta}^2 \mathbf{N}_{\theta} \mathbf{r} \mathbf{r}^T \mathbf{N}_{\theta} d\mathbf{N}_{\theta} \\
\nonumber &=& \left[ f(\theta) g(\theta) \right]_{\mathbf{N}_{\theta} \in \nu} - \int_{\mathbf{N}_{\theta} \in \nu} f'(\theta) g(\theta) d\mathbf{N}_{\theta} \\
\nonumber &=& \left[ \frac{1}{2} \tau_{\theta}^2 \mathbf{N}_{\theta}
\mathbf{r} \mathbf{r}^T \mathbf{N}_{\theta}^2
\right]_{\mathbf{N}_{\theta} \in \nu}
- \frac{1}{2} \int_{\mathbf{N}_{\theta} \in \nu} \tau_{\theta}^2 \mathbf{r} \mathbf{r}^T \mathbf{N}_{\theta} d\mathbf{N}_{\theta} \\
\nonumber &=& \frac{1}{2} \int_{{\mathbf N}_{\theta} \in \nu} \left( \tau_{\theta}^2 \frac{d}{d{\mathbf N}_{\theta}}[{\mathbf N}_{\theta}] {\mathbf r} {\mathbf r}^T {\mathbf N}_{\theta}^2
+ \tau_{\theta}^2 {\mathbf N}_{\theta} \frac{d}{d{\mathbf N}_{\theta}}[{\mathbf r} {\mathbf r}^T {\mathbf N}_{\theta}^2] \right) d{\mathbf N}_{\theta} \\
\nonumber & & - \frac{1}{2} {\mathbf r} {\mathbf r}^T {\mathbf K}_j.
\end{eqnarray}
The explanation for the last equality above is given in this footnote\footnote{
Here, we rewrite the first term by the product rule for derivative
and the fundamental theorem of calculus and then express part of the
second term by a generic tensor. We obtain:
\[
\frac{1}{2} \int_{{\mathbf N}_{\theta} \in \nu}
\frac{d}{d{\mathbf N}_{\theta}}[ \tau_{\theta}^2 {\mathbf N}_{\theta}
{\mathbf r} {\mathbf r}^T {\mathbf N}_{\theta}^2 ] d{\mathbf N}_{\theta} - \frac{1}{2} {\mathbf r} {\mathbf r}^T {\mathbf K}_j.
\]
}.

Finally, we apply the fact that $\mathbf{N}_{\theta}^q = \mathbf{N}_{\theta}$ (for all $q \in
\Bbb{Z}^+$) to convert $\frac{d}{d{\mathbf N}_{\theta}}[{\mathbf r} {\mathbf r}^T {\mathbf N}_{\theta}^2]$
into $\frac{d}{d{\mathbf N}_{\theta}}[{\mathbf r} {\mathbf r}^T {\mathbf N}_{\theta}]$. We obtain
\begin{eqnarray}
\nonumber & & \frac{1}{2} \int_{{\mathbf N}_{\theta} \in \nu}
\left( \tau_{\theta}^2 {\mathbf r} {\mathbf r}^T {\mathbf N}_{\theta}^2 +
\tau_{\theta}^2 {\mathbf N}_{\theta}
{\mathbf r} {\mathbf r}^T \right) d{\mathbf N}_{\theta} - \frac{1}{2} {\mathbf r} {\mathbf r}^T {\mathbf K}_j \\
\nonumber &=& \frac{1}{2} \left( {\mathbf r} {\mathbf r}^T {\mathbf K}_j + {\mathbf K}_j {\mathbf r} {\mathbf r}^T - {\mathbf r} {\mathbf r}^T {\mathbf K}_j  \right) \\
\label{eqn:int_by_part_result} &=& \frac{1}{2} {\mathbf K}_j \mathbf{r} \mathbf{r}^T.
\end{eqnarray}
By substituting Eqn. (\ref{eqn:int_by_part_result}) back to Eqn. (\ref{eqn:cf_inter}),
we obtain the result as follows:
\begin{equation}
\bS_{ij}= c_{ij} \bR  \bK_j \left(  \mathbf{I} - \frac{1}{2}  \mathbf{r} \mathbf{r}^T \right) \bR^T. \\
\end{equation}
Replace $\mathbf{r}$ by ${\mathbf r}_{ij}$ such that $\bR_{ij} =
{\mathbf I} - 2 {\mathbf r}_{ij} {\mathbf r}_{ij}^T$ and let
$\bR'_{ij} = ({\mathbf I} - \frac{1}{2} {\mathbf r}_{ij} {\mathbf
r}_{ij}^T) \bR_{ij}$, we obtain
\begin{equation}
\bS_{ij} = c_{ij} \bR_{ij} \bK_j \bR'_{ij}.
\label{eqn:cal_S}
\end{equation}
\end{proof}

A {\em structure-aware tensor} $\bK_i = \sum_j \bS_{ij}$ can thus be assigned at
each site $\bx_i$. This tensor sum considers both geometric proximity
and smoothness constraints in the presence of neighbors $\bx_j$ under
the chosen scale of analysis. Note also that Eqn. (\ref{eqn:cal_S}) is an
exact equivalent of Eqn. (\ref{eqn:S_integration}),
or (\ref{eqn:closed_derive_start}), that is, the first principle.
Since the first principle produces a positive semi-definite matrix, Eqn.
(\ref{eqn:cal_S}) still produces a positive semi-definite matrix. 

In tensor voting,
eigen-decomposition is applied to a structure-aware tensor. In
three dimensions, the eigensystem has eigenvalues
$\lambda_1 \ge \lambda_2 \ge \lambda_3 \ge 0$ with the corresponding
eigenvectors $\hat{e}_1, \hat{e}_2$, and $\hat{e}_3$. $\lambda_1 - \lambda_2$
denotes surface saliency with normal direction indicated by $\hat{e}_1$;
$\lambda_2 - \lambda_3$ denotes curve saliency with tangent direction
indicated by $\hat{e}_3$; junction saliency is indicated by
$\lambda_3$.

While it may be difficult to observe any geometric intuition directly
from this closed-form solution,  the geometric meaning of the
closed-form solution has been described by Eqn. (\ref{eqn:S_integration})
(or (\ref{eqn:closed_derive_start}), the first principle),
since Eqn. (\ref{eqn:cal_S}) is equivalent to Eqn.~(\ref{eqn:S_integration}). Note that our solution is different from, for instance,~\cite{jmlr},
where the $N$-D formulation is approached from a more geometric point of view.

\begin{figure}[t]
\begin{center}
\includegraphics[width=0.55\linewidth]{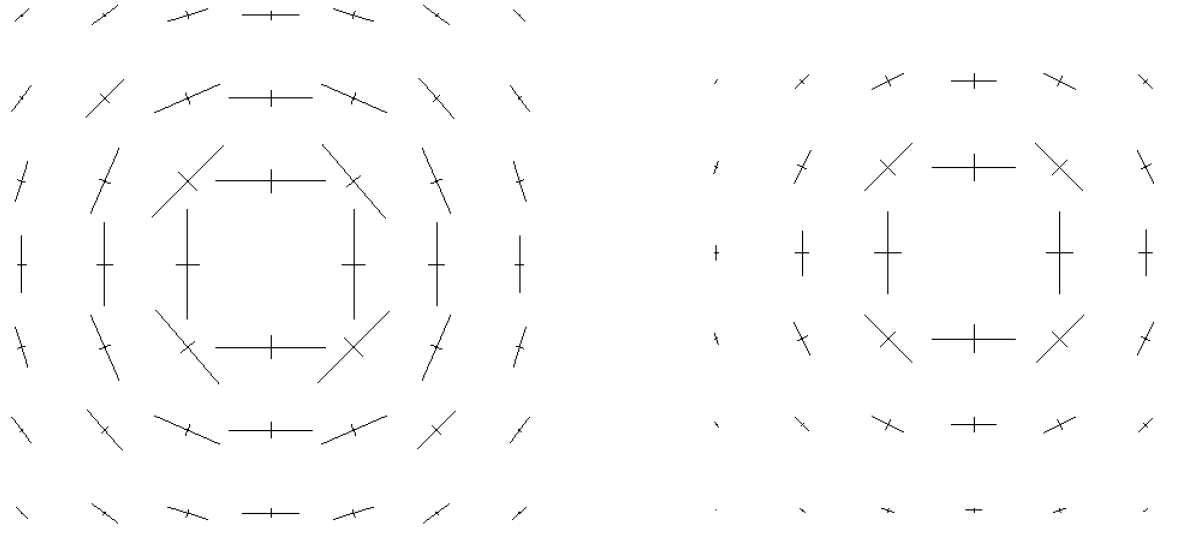}
\end{center}
\caption{\small Close-ups of ball voting fields generated using the original tensor voting
framework (left) and CFTV (right).}
\label{fig:ball_cmp}
\vspace{-0.2in}
\end{figure}

As will be shown in the next section on EMTV, the inverse of $\bK_{j}$ is used.
In case of a perfect stick tensor, which can be equivalently represented
as a rank-1 matrix, does not have an inverse. Similar in spirit where a
Gaussian function can be interpreted as an impulse function associated
with a spread representing uncertainty, a similar statistical approach is 
adopted here in characterizing our tensor inverse. Specifically,
the uncertainty is incorporated using a ball tensor, 
where $\epsilon {\mathbf I}$ is added to $\bK_{j}$, 
$\epsilon$ is a small positive constant (0.001) and ${\mathbf I}$ an identity matrix.
Fig.~\ref{fig:inlier_outlier_illustration} shows a tensor 
and its inverse for some selected cases.
The following corollary regarding the inverse of $\bS_{ij}$ is useful:
\begin{corollary}
Let $\bR{''}_{ij} = \bR_{ij} ({\mathbf I} +
{\mathbf r}_{ij} {\mathbf r}_{ij}^T)$ and also note that ${\mathbf
R}_{ij}^{-1} = {\mathbf R}_{ij}$, the corresponding inverse of $\bS_{ij}$
is:

\begin{equation}
\label{eqn:cal_S_inv} \bS_{ij}' = c_{ij}^{-1}
\bR{''}_{ij} \bK_j^{-1} \bR_{ij}
\end{equation}
\end{corollary}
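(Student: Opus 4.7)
The plan is to start from the closed-form expression $\bS_{ij} = c_{ij}\bR_{ij}\bK_j\bR'_{ij}$ given in Theorem~\ref{thm:cftv} and directly invert each factor. The scalar $c_{ij}$ contributes $c_{ij}^{-1}$ trivially (assuming $\bK_j$ is invertible, which is justified in the text by the $\epsilon \mathbf{I}$ regularization), so the real task is to show
\begin{equation*}
\bS_{ij}^{-1} = c_{ij}^{-1}(\bR'_{ij})^{-1}\bK_j^{-1}\bR_{ij}^{-1}
= c_{ij}^{-1}\bR''_{ij}\bK_j^{-1}\bR_{ij}.
\end{equation*}
This reduces the corollary to two algebraic identities: $\bR_{ij}^{-1}=\bR_{ij}$ and $(\bR'_{ij})^{-1}=\bR''_{ij}$.

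First I would verify that $\bR_{ij}$ is an involution. Using $\mathbf{r}_{ij}^T\mathbf{r}_{ij}=1$ (unit vector), a direct expansion gives $\bR_{ij}^2 = (\mathbf{I}-2\mathbf{r}_{ij}\mathbf{r}_{ij}^T)^2 = \mathbf{I}-4\mathbf{r}_{ij}\mathbf{r}_{ij}^T+4\mathbf{r}_{ij}\mathbf{r}_{ij}^T = \mathbf{I}$, so $\bR_{ij}$ is a Householder reflection and its own inverse. This is also noted in the statement of the corollary.

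Next I would invert $\bR'_{ij} = (\mathbf{I}-\tfrac12\mathbf{r}_{ij}\mathbf{r}_{ij}^T)\bR_{ij}$. Since we already have $\bR_{ij}^{-1}=\bR_{ij}$, the problem is to invert the rank-one update $\mathbf{I}-\tfrac12\mathbf{r}_{ij}\mathbf{r}_{ij}^T$. Using the Sherman--Morrison formula (or the elementary identity $(\mathbf{I}+\alpha\mathbf{r}\mathbf{r}^T)^{-1}=\mathbf{I}-\tfrac{\alpha}{1+\alpha}\mathbf{r}\mathbf{r}^T$ valid for unit $\mathbf{r}$) with $\alpha=-\tfrac12$ yields $(\mathbf{I}-\tfrac12\mathbf{r}_{ij}\mathbf{r}_{ij}^T)^{-1}=\mathbf{I}+\mathbf{r}_{ij}\mathbf{r}_{ij}^T$. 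Therefore
\begin{equation*}
(\bR'_{ij})^{-1} = \bR_{ij}^{-1}\,(\mathbf{I}-\tfrac12\mathbf{r}_{ij}\mathbf{r}_{ij}^T)^{-1}
= \bR_{ij}(\mathbf{I}+\mathbf{r}_{ij}\mathbf{r}_{ij}^T) = \bR''_{ij},
\end{equation*}
exactly matching the definition. As a sanity check I would multiply $\bR''_{ij}\bR'_{ij}$ out and confirm that all $\mathbf{r}_{ij}\mathbf{r}_{ij}^T$ cross terms collapse (using $(\mathbf{r}_{ij}\mathbf{r}_{ij}^T)^q=\mathbf{r}_{ij}\mathbf{r}_{ij}^T$, the same identity exploited in the proof of Theorem~\ref{thm:cftv}) and return $\mathbf{I}$.

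The derivation is almost entirely routine, so there is no real obstacle; the only delicate point is the invertibility of $\bK_j$, which fails when $\bK_j$ is a pure stick (rank-one) tensor. I would explicitly invoke the small regularization $\bK_j\leftarrow\bK_j+\epsilon\mathbf{I}$ introduced just before the corollary to guarantee $\bK_j^{-1}$ exists, after which the chain of inversions is legitimate and the closed form $\bS_{ij}^{-1}=c_{ij}^{-1}\bR''_{ij}\bK_j^{-1}\bR_{ij}$ follows.
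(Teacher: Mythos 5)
Your proposal is correct and follows essentially the same route as the paper, whose proof is simply the one-line remark that the corollary follows ``by applying inverse to Eqn.~(\ref{eqn:cal_S})''; you have merely supplied the routine details (the involution property of the Householder reflection $\bR_{ij}$ and the rank-one inverse $(\mathbf{I}-\tfrac12\mathbf{r}_{ij}\mathbf{r}_{ij}^T)^{-1}=\mathbf{I}+\mathbf{r}_{ij}\mathbf{r}_{ij}^T$) that the paper leaves implicit. Your explicit appeal to the $\epsilon\mathbf{I}$ regularization for the invertibility of $\bK_j$ is a sensible addition consistent with the discussion preceding the corollary.
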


\begin{proof}
This corollary  can simply be proved by applying inverse
to Eqn (\ref{eqn:cal_S}).
\end{proof}

Note the initial $\bK_j$ can be either derived when input 
direction is available, or simply assigned as an
identity matrix otherwise.

\subsection{Examples}

Using Eqn~(\ref{eqn:cal_S}), given any input $\bK_j$ at site $j$
which is a second-order symmetric tensor, the output
tensor $\bS_{ij}$ can be computed directly. Note that $\bR_{ij}$
is a $d \times d$ matrix of the same dimensionality $d$ as $\bK_j$.
To verify our closed-form solution, we perform the following 
to compare with the voting fields used by the original
tensor voting framework (Fig.~\ref{fig:illus}):

\begin{itemize}
\item[(a)] Set $\bK$ to be an identity (ball tensor) in Eqn~(\ref{eqn:cal_S})
and compute all votes $\bS$ in a neighhorhood. This procedure generates the
ball voting field, Fig.~\ref{fig:illus}(a).
\item[(b)] Set $\bK$ to be a plate tensor
$\left[\begin{array}{ccc}
1 & 0 & 0 \\
0 & 1 & 0 \\
0 & 0 & 0
\end{array}\right]$
in Eqn~(\ref{eqn:cal_S}) and compute all votes $\bS$ in a neighborhood.
This procedure generates the plate voting field, Fig.~\ref{fig:illus}(b).
\item[(c)] Set $\bK$ to be a stick tensor
$\left[\begin{array}{ccc}
1 & 0 & 0 \\
0 & 0 & 0 \\
0 & 0 & 0
\end{array}\right]$
in Eqn~(\ref{eqn:cal_S}) and compute all votes $\bS$ in a neighborhood.
This procedure generates the stick voting field, Fig.~\ref{fig:illus}(c).
\item[(d)] Set $\bK$ to be {\em any} generic second-order tensor in
Eqn~(\ref{eqn:cal_S}) to compute a tensor vote $\bS$ at a given site.
We do not need a voting field, or the somewhat complex procedure described
in~\cite{jmlr}. In one single step using the closed-form solution
Eqn~(\ref{eqn:cal_S}), we obtain $\bS$ as shown in Fig.~\ref{fig:illus}(d).
\end{itemize}

Note that the stick voting field generation is the same as the closed-form 
solution given by the arc of an osculating circle. On the other hand, since the
closed-form solution does not remove votes lying beyond the 45-degree zone 
as done in the original framework, it is useful
to compare the ball voting field generated using the CFTV and the original 
framework. Fig.~\ref{fig:ball_cmp} shows the close-ups of the ball voting fields 
generated using the original framework and CFTV. As anticipated, the tensor 
orientations are almost the same (with the maximum angular deviation 
at $4.531^\circ$), while the tensor strength is different due to the use of 
different decay functions. The new computation results in perfect vote 
orientations which are radial, and the angular discrepancies are due 
to the discrete approximations in the original solution.

While the above illustrates the usage of Eqn~(\ref{eqn:cal_S})
in three dimensions, the equation applies to any dimensions $d$. All
of the $\bS$'s returned by Eqn~(\ref{eqn:cal_S}) are second-order symmetric
tensors and can be decomposed using eigen-decomposition.
The implementation of Eqn~(\ref{eqn:cal_S}) is a matter
of a few lines of C++ code.

Our ``voting without voting fields'' method is uniform to 
any input tensors $\bK_j$ that are second-order symmetric tensor 
in its closed-form expressed by Eqn~(\ref{eqn:cal_S}), 
where formal mathematical operation
can be applied on this compact equation, which is otherwise
difficult on the algorithmic procedure described in previous tensor
voting papers.
Notably, using the closed-form solution,
we are now able to prove mathematically the convergence of tensor voting
in the next section.


\vspace{-0.1in}

\subsection{Time Complexity}
Akin to the original tensor voting formalism, each site (input or
non-input) communicates with each other on an Markov random field 
(MRF) in a broad sense, where the number of edges depends on the scale
of analysis, parameterized by $\sigma_d$ in Eqn
(\ref{eqn:neighbor_weighting_Gaussian}). In our implementation,
we use an efficient data structure such as ANN tree~\cite{ann} to
access a constant number of neighbors $\bx_j$ of each $\bx_i$.
It should be noted that under a large scale of analysis where
the number of neighbors is sufficiently large, similar number
of neighbors are accessed in ours and the original tensor voting
implementation.

The speed of accessing nearest neighbors can be greatly increased
(polylogarithmic) by using ANN thus making efficient the computation of a
structure-aware tensor.  Note that the running time
for this implementation of closed-from solution is $O(d^3)$, while
the running time for the original tensor voting (TV) is $O( u^{d-1} )$,
where $d$ is the dimension of the space and $u$ is the number
of sampling directions for a given dimension. Because of this,
a typical TV implementation precomputes and stores the dense tensor fields.
For example, when $d = 3$ and $u = 180$ for high accuracy, our
method requires 27 operation units, while  a typical TV implementation
requires 32400 operation units.
Given 1980 points and the same number of neighbors,
the time to compute a structure-aware tensor using our method
is about $0.0001$ second; it takes about $0.1$ second for
a typical TV implementation to output the corresponding tensor.
The measurement was performed on a computer running on a core 
duo 2GHz CPU with 2GB RAM.

Note that the asymptotic running time for the improved TV in~\cite{jmlr} is
$O( d \gamma^2 )$ since it applies Gramm-Schmidt process to perform
component decomposition, where $\gamma$ is the number of linearly independent
set of the tensors.
In most of the cases, $\gamma = d$. So, the running time for our method is
comparable to~\cite{jmlr}. However, their approach does not have a
precise mathematical solution.

\section{MRFTV}

We have proved CFTV for the special theory of tensor voting, or the
``first voting pass'' for structure inference. Conventionally, tensor
voting was done in two passes, where the second pass was used for
structure propagation in the preferred direction after disabling the ball
component in the structure-aware tensor. What happens if more tensor
voting passes are applied? This has never been answered properly.

In this section we provide a convergence proof for tensor voting based 
on CFTV: the structure-aware tensor obtained at each site achieves 
a stationary state upon convergence. Our convergence proof makes 
use of Markov random fields (MRF), thus termed as MRFTV. 

It should be noted that the original tensor voting formulation is also
constructed on an MRF according to the broad definition, since random
variables (that is, the tensors after voting) are defined on the nodes
of an undirected graph in which each node is connected to all neighbors
within a fixed distance.  On the other hand, without CFTV, it was
previously difficult to write down an objective function and to prove
the convergence.  One caveat to note in the following 
is that we do not disable the ball component in each iteration, which will be
addressed in the future in developing the general theory of tensor voting
in structure propagation.
As we will demonstrate, MRFTV does not smooth out important features 
(Fig.~\ref{fig:L}) and still possesses high outlier rejection ability 
(Fig.~\ref{fig:filtering}).

\begin{figure}[t]
\begin{center}
\includegraphics[width=0.98\linewidth]{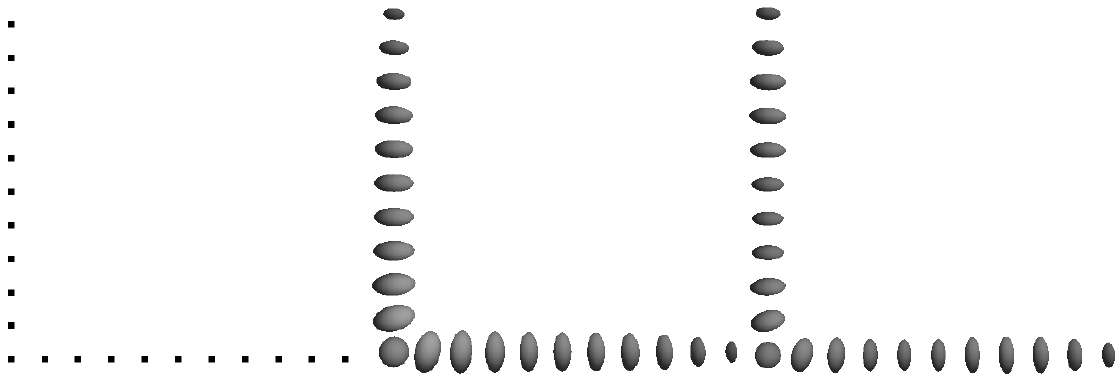} 
\end{center}
\vspace{-0.1in}
\caption{\small Convergence of MRF-TV. From left to right: input points,
result after 2 passes, result after convergence (10 iterations). Visually,
this simple example distinguishes tensor voting from smoothing as the
sharp orientation discontinuity is preserved upon convergence. 
}
\label{fig:L}
\vspace{-0.1in}
\end{figure}

Recall in MRF, a Markov network is a graph consisting of two types of nodes --
a set of hidden variables $\bE$ and a set of observed variables $\bO$, where
the edges of the graph are described by the
following posterior probability $\text{P} (\bE|\bO)$ with
standard Bayesian framework:
\begin{equation}
\text{P} (\bE|\bO) \propto \text{P} (\bO|\bE) \text{P}(\bE)
\label{eqn:bayesian}
\end{equation}

By letting $\bE = \{ \bK_i | i = 1,2,\cdots,N \}$ and
$\bO = \{ \tilde{\bK}_i | i = 1,2, \cdots, N\}$, where $N$ is total number of
points and $\tilde{\bK}_i$ is the known tensor at $\bx_i$, and suppose that
inliers follow Gaussian distribution, we obtain the 
likelihood $\text{P} (\bO|\bE)$ and the prior $\text{P}(\bE)$ as follows:
\begin{eqnarray}
\label{mrf1}
\text{P} (\bO|\bE) &\propto& \prod_i \text{p} (\tilde{\bK}_i | \bK_i) = \prod_i e^{ - \frac{ || \bK_i - \tilde{\bK}_i ||_F^2 }{ \sigma_h} } \\
\text{P}(\bE) &\propto& \prod_i \prod_{j \in {\cal N} (i)} \text{p} (\bS_{ij}|\bK_i ) \\
&=& \prod_i \prod_{j \in {\cal N} (i)} e^{- \frac{ || \bK_i - \bS_{ij} ||_F^2 }{\sigma_s} }
\label{mrf2}
\end{eqnarray}
where $||\cdot ||_F$ is the Frobenius norm,
$\tilde{\bK}_i$ is the known tensor
at $\bx_i$, ${\cal N} (i)$ is the set of neighbor corresponds to $\bx_i$ and
$\sigma_h$ and $\sigma_s$ are two constants respectively. 

Note that we use the Frobenius norm to encode tensor orientation
consistency as well as to reflect the necessary vote saliency 
information including distance and continuity
attenuation. For example, suppose we have a unit stick
tensor at $\bx_i$ and a stick vote (received at $\bx_i$), which
is parallel to it but with magnitude equal to 0.8. In another scenario
$\bx_i$ receives from a voter farther away a stick vote with
the same orientation but magnitude being equal to 0.2.
The Frobenius norm reflects the difference in saliency despite 
the perfect orientation consistency in both cases.  Notwithstanding, it 
is arguable that Frobenius norm may not be the perfect solution
to encode orientation consistency constraint in the pertinent 
equations, while this current form works acceptably well in our 
experiments in practice.

By taking the logarithm of Eqn~(\ref{eqn:bayesian}), we obtain 
the following energy function:
\begin{equation}
E(\bE) = \sum_i || \bK_i - \tilde{\bK}_i ||_F^2 +
g \sum_i \sum_{j \in {\cal N} (i)} || \bK_i - \bS_{ij} ||_F^2
\label{eqn:energy_function}
\end{equation}
where $g = \frac{\sigma_h}{\sigma_s}$. Theoretically, this {\em quadratic}
energy function can be directly solved by Singular Value
Decomposition (SVD).  Since $N$ can be large thus making direct
SVD impractical, we adopt an iterative approach: by taking
the partial derivative of Eqn~(\ref{eqn:energy_function})
(w.r.t. to $\bK_i$) the following update rule is obtained:
\begin{equation}
\label{eqn:energy_update_rule}
\bK_i^* = (\tilde{\bK}_i + 2 g \sum_{j \in {\cal N} (i)} \bS_{ij}) (\bI + g \sum_{j \in {\cal N} (i)} (\bI + c_{ij}^2 {\bR_{ij}'}^2) )^{-1}
\end{equation}
which is a Gauss-Seidel solution.
When successive over-relaxation (SOR) is employed, the update rule becomes:
\begin{equation}
\label{eqn:sor_update_rule}
\bK_i^{(m+1)} = (1 - q) \bK_i^{(m)} + q \bK_i^*
\end{equation}
where $1 < q < 2$ is the SOR weight and $m$ is the iteration number.
After each iteration, we normalize $\bK_i$ such that the eigenvalues 
of the corresponding eigensystem are within the range $(0, 1]$. 

The above proof on convergence of MRF-TV 
shows that structure-aware tensors
achieve stationary states after a finite number Gauss-Seidel 
iterations in the above formulation.
It also dispels a common pitfall that tensor voting is similar in
effect to smoothing. Using the same scale of analysis (that is, in
(\ref{eqn:neighbor_weighting_Gaussian})) and same $\sigma_h$, $\sigma_s$
in each iteration, tensor saliency and orientation will both
converge.  We observe that the converged tensor orientation is in
fact similar to that obtained after two voting passes using the
original framework, where the orientations at curve junctions
are not smoothed out. See Fig.~\ref{fig:L} for an example where sharp
orientation discontinuity is not smoothed out when tensor voting
converges. Here, $\lambda_1$ of each structure-aware tensor is not
normalized to 1 for visualizing its structure saliency after convergence. 
Table~\ref{tab:conv} summarizes the quantitative comparison with the
ground-truth orientation.

\if 0
\begin{table}[t]
\begin{center}
\begin{tabular}{cccc}
\hline
& \multicolumn{3}{c}{Tensor orientation angular error (in deg)} \\ 
& min & max & avg \\ \hline
2 passes & 0.000000 & 0.115430 & 0.027264 \\
converged & 0.000000 & 0.045051 & 0.010683 \\ \hline
\end{tabular}
\end{center}
\caption{\small Comparison with ground truth for the example in Fig.~\ref{fig:L}. }
\label{tab:conv}
\vspace{-0.2in}
\end{table}
\fi

\begin{table}[t]
\begin{center}
\includegraphics[width=0.75\linewidth]{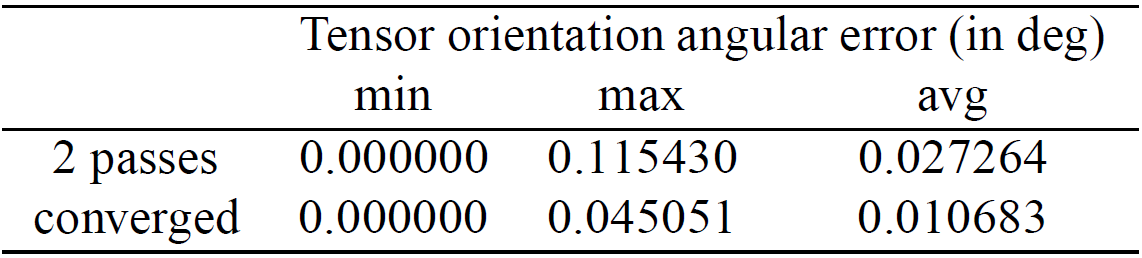}
\end{center}
\vspace{-0.1in}
\caption{\small Comparison with ground truth for the example in Fig.~\ref{fig:L}. }
\label{tab:conv}
\vspace{-0.2in}
\end{table}

\section{EMTV}


Previously, while tensor voting was capable of rejecting outliers, it
fell short of producing accurate parameter estimation, explaining the use
of RANSAC in the final parameter estimation step after outlier
rejection~\cite{4dpami}.

This section describes the EMTV algorithm for optimizing
a) the structure-aware tensor $\bK$ at each input site, and b)
the parameters of a single plane $\bv$ of {\em any} dimensionality
containing the inliers.
This algorithm will be applied to stereo matching.



We first formulate the three constraints to be used in EMTV.
These constraints are not mutually exclusive, where knowing the
values satisfying one constraint will help computing the values of the others. However, in
our case, they are all unknowns, so EM is particularly suitable for
their optimization, since the expectation calculation and parameter
estimation are solved alternately.

\vspace{-0.15in}

\subsection{Constraints}
\label{sec:cases}

\noindent{\bf Data constraint } Suppose we have a set of clean
data. One necessary objective is to minimize
the following for all $\bx_i \in \Bbb{R}^d$ with $d > 1$:
\begin{equation}
\label{eqn:data}
|| \bx_i^T \bv ||
\end{equation}
where $\bv \in \Bbb{R}^d$ is a unit vector representing 
the plane (or the model) to be estimated.\footnote{
Note that, in some cases, the underlying model is represented in this
form $\bx_i^T \bv - z_i$ where we can re-arrange it into
the form given by Eqn.~(\ref{eqn:data}). 
For example, expand $\bx_i^T \bv - z_i$ into $ax_i + by_i + 1 z_i = 0$, 
which can written in the form of (\ref{eqn:data}).
}
This is a typical data term that measures the faithfulness of the input data to the fitting plane.

\noindent{\bf Orientation consistency  } The plane being estimated is defined by
the vector $\bv$. Since the tensor $\bK_i \in \Bbb{R}^d \times \Bbb{R}^d$
encodes structure awareness, if
$\bx_i$ is an inlier, the orientation information encoded by $\bK_i$ and $\bv$ have to be
consistent. That is, the variance $\bv^T \bK_i^{-1} \bv$ produced by $\bv$
should be minimal.
Otherwise, $\bx_i$ might be generated by other models even if it minimizes
Eqn~(\ref{eqn:data}).
Mathematically, we minimize:
\begin{equation}
\label{eqn:orientation_consistency}
|| \bv^T \bK_i^{-1} \bv ||.
\end{equation}


\noindent{\bf Neighborhood consistency  } While the estimated $\bK_i$ helps
to indicate inlier/outlier information, $\bK_i$ has to be consistent with
the local structure imposed by its neighbors (when they are known).
If $\bK_i$ is consistent with $\bv$ but not the local neighborhood,
either $\bv$ or $\bK_i$ is wrong. In practice, we minimize the
following Frobenius norm as in Eqns (\ref{mrf1})--(\ref{mrf2}):
\begin{equation}
\label{eqn:neighborhood_consistency}
|| \bK_i^{-1} - \bS'_{ij} ||_F.
\end{equation}
In the spirit of MRF, $\bS'_{ij}$ encodes the tensor
information within $\bx_i$'s neighborhood, thus a natural choice for
defining the term for measuring neighborhood orientation consistency.
This is also useful, as we will see, to the M-step of EMTV which makes
the MRF assumption.

The above three constraints will interact with each other in
the proposed EM algorithm.

\subsection{Objective Function}
Define $\bO = \{ o_i = \bx_i | i = 1, \cdots , N \}$ to be the set of
observations. Our goal is to optimize $\bv$ and $\bK_i^{-1}$ given
$\bO$. Mathematically, we solve the objective function:
\begin{equation}
\label{eqn:em_objective} \Lambda^* = \arg \max_\Lambda P( \bO, \bR |
\Lambda )
\end{equation}
where $P( \bO, \bR | \Lambda )$ is the complete-data likelihood to
be maximized, $\bR = \{ r_i \}$ is a set of hidden states indicating
if observation $o_i$ is an outlier ($r_i = 0$) or inlier ($r_i = 1$), and
$\Lambda = \{ \{ \bK_i^{-1} \}, \bv, \alpha, \sigma, \sigma_1, \sigma_2 \}$
is a set of parameters to be estimated. $\alpha$, $\sigma$, $\sigma_1$ and
$\sigma_2$ are parameters imposed by some distributions, which will
be explained shortly by using an equation to be introduced\footnote{See the M-step in
Eqn~(\ref{eqn:m_step_update}).}. Our EM algorithm estimates an optimal
$\Lambda^*$ by finding the value of the complete-data log likelihood
with respect to $\bR$ given $\bO$ and the current estimated
parameters $\Lambda'$:
\begin{equation}
\label{eqn:q_function} Q( \Lambda, \Lambda' ) = \sum_{\bR \in \psi}
\log P(\bO, \bR | \Lambda ) P(\bR | \bO, \Lambda')
\end{equation}
where $\psi$ is a space containing all possible configurations of
$\bR$ of size $N$. Although EM does not guarantee a global optimal
solution theoretically, because CFTV provides good initialization,
we will demonstrate empirically that reasonable results can be obtained.

\vspace{-0.1in}

\subsection{ Expectation (E-Step) }
\label{sec:e_step} In this section, the marginal distribution $p(r_i
| o_i, \Lambda')$ will be defined so that we can maximize the
parameters in the next step (M-Step) given the current parameters.

If $r_i = 1$, the observation $o_i$ is an inlier and therefore
minimizes the first two conditions (Eqns~\ref{eqn:data}
and~\ref{eqn:orientation_consistency}) in
Section~\ref{sec:cases}, that is, the data and orientation
constraints. In both cases, we assume that inliers follow a Gaussian
distribution which explains the use of $\bK_i^{-1}$ instead
of $\bK_i$.\footnote{Although a linear structure is being optimized here, 
the inliers together may describe a structure that does not necessarily 
follow any particular model. Each inlier may not exactly lie on this 
structure where the misalignment follows the Gaussian distribution.}
We model ${\rm p}( o_i | r_i, \Lambda' )$ as
\begin{eqnarray}
\propto \left\{%
\begin{array}{ll}
  \exp(-\frac{|| \bx_i^T \bv ||^2 }{ 2 \sigma^2 } ) \exp(-\frac{|| \bv^T \bK_i^{-1} \bv ||}{ 2 \sigma_1^2 } ),
& \hbox{if $r_i = 1$;} \\
    \frac{1}{C}, & \hbox{if $r_i = 0$.}
\label{eqn:o_i_prob}
\end{array}%
\right.
\end{eqnarray}


We assume that outliers follow uniform distribution where $C$ 
is a constant that models the distribution. Let $C_m$ be the 
maximum dimension of the bounding box of the input. In practice, 
$C_m \leq C \leq 2 C_m$ produces similar results. 

Since we have no prior information on a point being an inlier 
or outlier, we may assume that the mixture probability of the 
observations $p(r_i = 1) = p(r_i = 0)$ equals to a constant 
$\alpha = 0.5$ such that we have no bias to either category 
(inlier/outlier). 
For generality in the following
we will include $\alpha$ in the derivation.

Define $w_i = p(r_i | o_i, \Lambda')$ to be the probability of $o_i$
being an inlier. Then

\if 0
\begin{eqnarray}
\nonumber w_i &= & {\rm p}(r_i =1 | o_i, \Lambda') = 
\frac{ {\rm p}( o_i, r_i = 1 | \Lambda' ) } { {\rm p}( o_i | \Lambda') } \\
& = & \frac{ \alpha ~ \exp( - \frac{|| \bx_i^T \bv ||^2 }{ 2 \sigma^2 } ) \exp( -\frac{|| \bv^T \bK_i^{-1} \bv ||}{ 2 \sigma_1^2 } ) } { \alpha ~ \exp( - \frac{|| \bx_i^T \bv ||^2 }{ 2 \sigma^2 }  ) 
\exp( - \frac{|| \bv^T \bK_i^{-1} \bv || }{ 2 \sigma_1^2 } ) 
+ \frac{ 1 - \alpha } {C} }
\label{eqn:e_step_update} 
\end{eqnarray}
\fi

\begin{eqnarray}
\nonumber w_i &= & {\rm p}(r_i =1 | o_i, \Lambda') =
\frac{ {\rm p}( o_i, r_i = 1 | \Lambda' ) } { {\rm p}( o_i | \Lambda') } \\
& = & \frac{ \alpha \beta ~ \exp( - \frac{|| \bx_i^T \bv ||^2 }{ 2 \sigma^2 } )    \exp( - \frac{|| \bv^T \bK_i^{-1} \bv ||}{ 2 \sigma_1^2 } ) } { \alpha \beta ~   \exp( - \frac{|| \bx_i^T \bv ||^2 }{ 2 \sigma^2 }  )   \exp( - \frac{|| \bv^T \bK_i^{-1} \bv || }{ 2 \sigma_1^2 } )
+ \frac{ 1 - \alpha } {C} }
\label{eqn:e_step_update}
\end{eqnarray}
where $\beta = \frac{1}{2 \sigma \sigma_1 \pi}$ is the normalization term.

\vspace{-0.1in}

\subsection{ Maximization (M-Step) }
\label{sec:m_step} In the M-Step, we maximize
Eqn.~(\ref{eqn:q_function}) using $w_i$ obtained from the E-Step.
Since neighborhood information is considered, we model $P(\bO, \bR |
\Lambda )$ as a MRF:
\begin{equation}
\label{eqn:mrf} P(\bO, \bR | \Lambda ) = \prod_i \prod_{ j \in {\cal
G}(i)} p(r_i | r_j, \Lambda) p(o_i | r_i, \Lambda)
\end{equation}
where ${\cal G}(i)$ is the set of neighbors of $i$. In theory,
${\cal G}(i)$ contains all the input points except $i$, since
$c_{ij}$ in Eqn.~(\ref{eqn:neighbor_weighting_Gaussian}) is always
non-zero (because of the long tail of the Gaussian distribution). In
practice, we can prune away the points in ${\cal G}(i)$ where the
values of $c_{ij}$ are negligible. This can greatly reduce the size
of the neighborhood. Again, using ANN tree~\cite{ann}, the speed of
searching for nearest neighbors can be greatly increased.

Let us examine the two terms in Eqn.~(\ref{eqn:mrf}).
$p(o_i | r_i, \Lambda)$ has been
defined in Eqn.~(\ref{eqn:o_i_prob}). We define $p(r_i | r_j,
\Lambda)$ here. Using the third condition mentioned in
Eqn.~(\ref{eqn:neighborhood_consistency}), we have:
\begin{equation}
p(r_i | r_j, \Lambda) = \exp( - \frac{ ||\bK_i^{-1} - \bS'_{ij}
||^2_F } {2 \sigma^2_2})
\end{equation}
We are now ready to expand Eqn.~(\ref{eqn:q_function}). Since $r_i$
can only assume two values (0 or 1), we can rewrite $Q(\Lambda,
\Lambda')$ in Eqn.~(\ref{eqn:q_function}) into the following form:

\begin{eqnarray}
\nonumber \sum_{t \in \{0,1\}} \log ( \prod_i \prod_{ j \in {\cal
G}(i)} p(r_i = t | r_j, \Lambda) p(o_i | r_i = t, \Lambda))
P(\bR | \bO,\Lambda')
\end{eqnarray}
After expansion,
\begin{eqnarray}
\nonumber
Q(\Lambda , \Lambda') &=& \sum_i \log( \alpha \frac{1}{\sigma \sqrt{2 \pi} } \exp(-\frac{|| \bx_i^T \bv||^2 }{ 2 \sigma^2 } )  ) w_i \\
\nonumber
&+& \sum_i \log( \frac{1}{\sigma_1 \sqrt{2 \pi}} \exp(-\frac{|| \bv^T \bK_i^{-1} \bv ||}{ 2 \sigma_1^2 } ) ) w_i\\    
\label{eqn:q_function_expanded} &+& \sum_i \log( \exp(-\frac{||
\bK_i^{-1} - \bS'_{ij} ||_F^2}{ 2 \sigma_2^2} ) ) w_i w_j \nonumber \\
&+& \sum_i \log( \frac{1 - \alpha}{C} ) ( 1 - w_i )
\end{eqnarray}

To maximize Eqn.~(\ref{eqn:q_function_expanded}), we set the first
derivative of $Q$ with respect to $\bK_i^{-1}$, $\bv$, $\alpha$, $\sigma$,
$\sigma_1$ and $\sigma_2$ to zero respectively to obtain the
following set of update rules:

\begin{eqnarray*}
\nonumber \alpha &=& \frac{1}{N} \sum_i w_i \\
\nonumber \bK_i^{-1} &=&  \frac{1}{\sum_{j \in {\cal G}(i)} w_j} ( \sum_{j \in {\cal G}(i)} \bS'_{ij} w_j - \frac{\sigma_2^2}{ 2 \sigma_1^2 } \bv \bv^T w_i )\\
\nonumber  \min ||  \bM \bv || & & \mbox{subject to $|| \bv || = 1$} \\
\nonumber \sigma^2 &=& \frac{\sum_i ||\bx_i^T \bv ||^2 w_i }{ \sum_i w_i} 
\end{eqnarray*}
\begin{eqnarray}
\nonumber \sigma_1^2 &=& \frac{ \sum_i || \bv^T \bK_i^{-1} \bv || w_i }{ \sum_i w_i } \\
\nonumber \label{eqn:m_step_update} \sigma_2^2 &=& \frac{ \sum_i \sum_{j \in {\cal G}(i)} || \bK_i^{-1} - \bS'_{ij} ||_F^2 w_i w_j }{ \sum_i w_i} \\
\end{eqnarray}
where $\bM = \sum_i \bx_i \bx_i^T w_i +
\frac{\sigma^2}{\sigma_1^2} \sum_i \bK_i^{-1} w_i$ and ${\cal G}(i)$
is a set of neighbors of $i$.  
Eqn.~(\ref{eqn:m_step_update}) constitutes the set 
of update rules for the M-step.



In each iteration, after the update rules have been
executed, we normalize $\bK_i^{-1}$ 
onto the feasible solution space by normalization, that is,
the eigenvalues of the corresponding eigensystem are
within the range $(0, 1]$.   
Also, $\bS'_{ij}$ will be updated
with the newly estimated $\bK_i^{-1}$.


\begin{figure*}
\begin{center}
\includegraphics[width=0.95\linewidth]{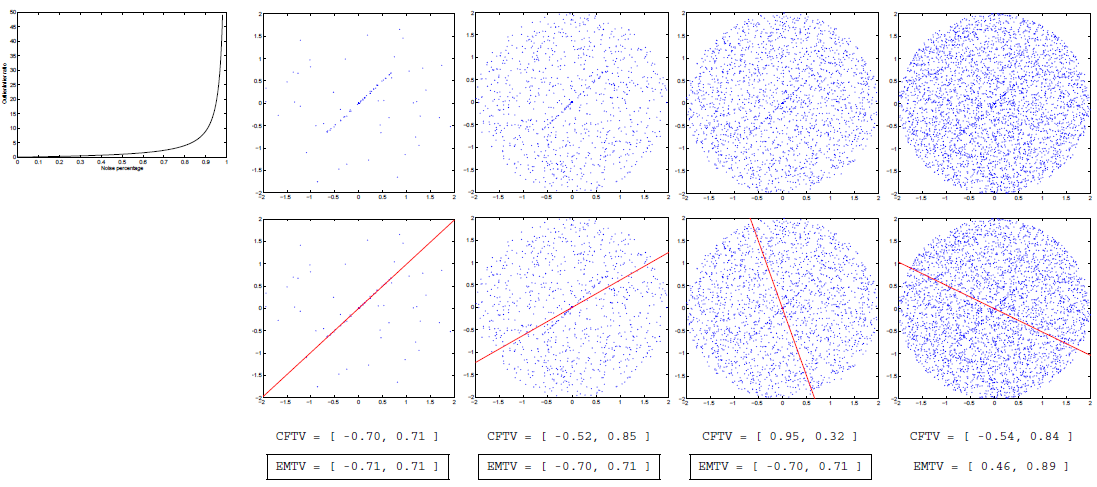}
\caption{\small
The top-left subfigure shows the plot of $\frac{R}{R+1}$. 
The four 2D data sets shown here have OI ratios $[1,20,45,80]$ 
respectively, which correspond to
outlier percentages $[50\%,95\%,98\%,99\%]$. Our EMTV can tolerate
OI ratios $\le$ 51 in this example. The original input,
the estimated line after the first EMTV iteration 
using CFTV to initialize the algorithm, and the line
parameters after the first EMTV iteration and final EMTV convergence
were shown.  The ground-truth parameter is $[-0.71, 0.71]$.
}
\label{fig:percent_vs_IO}
\end{center}
\vspace{-0.2in}
\end{figure*}

\subsection{Implementation and Initialization}

In summary, Eqns~(\ref{eqn:cal_S_inv}),~(\ref{eqn:e_step_update})
and~(\ref{eqn:m_step_update}) are all the equations needed to
implement EMTV and therefore the implementation is straightforward.

Noting that initialization is important to an EM algorithm, to initialize EMTV, we set $\sigma_1$
to be a very large value, $\bK_i = \mathbf{I}$
and $w_i = 1$ for all $i$.  $\bS_{ij}'$ is initialized to
be the inverse of $\bS_{ij}$, computed using the closed-form
solution presented in the previous section.
These initialization values mean that at the beginning we have no preference for the surface orientation. So all the input points are initially considered as inliers. With such initialization, we execute the first and the second rules in
Eqn.~(\ref{eqn:m_step_update}) in sequence. Note that when the first rule is being executed, the term involving $\bv$ is ignored because of the large $\sigma_1$, thus we can obtain $\bK_i^{-1}$
for the second rule. After that, we can start executing the
algorithm from the E-step.  This initialization procedure
is used in all the experiments in the following sections.
Fig.~\ref{fig:percent_vs_IO} shows the result after the first 
EMTV iteration on an example; note in particular that even though the initialization is 
at times not close to the solution
our EMTV algorithm can still converge to the desired ground-truth 
solution.

\section{Experimental Results}
\label{sec:results}
First, quantitative comparison
will be studied to evaluate EMTV with well-known algorithms:
RANSAC~\cite{fischler_acm81},
ASSC~\cite{wang_and_suter}, and TV~\cite{book_with_names}.
In addition, we also provide the result using the least squares method
as a baseline comparison.
Second, we apply our method to real data
with synthetic outliers and/or noise where the
ground truth is available, and perform comparison.
Third, more experiments on multiview stereo matching on
real images are performed.

As we will show, EMTV performed the best in highly corrupted data, because
it is designed to seek one linear structure of known type (as opposed to
multiple, potentially nonlinear structures of unknown type).  The use
of orientation constraints, in addition to position constraints, makes
EMTV superior to the random sampling methods as well.

\noindent {\bf Outlier/inlier (OI) ratio} \ 
We will use the {\em outlier/inlier (OI) ratio} to characterize
the outlier level, which is related to the outlier percentage
$Z \in [0,1]$
\begin{equation}
\label{eqn:percent_vs_IO}
Z = \frac{R}{R + 1}
\end{equation}
where $R$ is the OI ratio.
Fig.~\ref{fig:percent_vs_IO} shows a plot of $Z = \frac{R}{R+1}$
indicating that it is much more difficult for a given method to handle
the same percentage increase in outliers as the value of $Z$ increases.
Note the rapid increase in the number of outliers as $Z$ increases from
50\% to 99\%. That is, it is more difficult for a given method 
to tolerate an addition of, say 20\% outliers, when $Z$ is
increased from 70\% to 90\% than from 50\% to 70\%.
Thus the OI ratio gives more insight in studying an 
algorithm's performance on severely corrupted data.

\vspace{-0.1in}
\subsection{Robustness}
We generate a set of 2D synthetic data
to evaluate the performance on line fitting, by randomly sampling $44$ points from a line within the range $[-1,-1] \times [1, 1]$
where the locations of the points are contaminated by Gaussian noise of $0.1$ standard deviation. 
Random outliers were added to the data with different OI ratios.

The data set is then partitioned into two:

\begin{itemize}
\item {\sc Set 1}: OI ratio $\in [0.1, 1]$ with step size 0.1,
\item {\sc Set 2}: OI ratio $\in [1,100]$ with step size 1.
\end{itemize}

In other words, the partition is done at 50\% outliers. 
Note from the plot in Fig.~\ref{fig:percent_vs_IO}
that the number of outliers increases rapidly after 50\% outliers.  Sample data sets with
different OI ratios are shown in the top of Fig.~\ref{fig:percent_vs_IO}. Outliers were
added within a bounding circle of radius 2. In particular, the bottom of
Fig.~\ref{fig:percent_vs_IO} shows the result of the first EMTV
iteration upon initialization using CFTV.

\begin{figure*}
\begin{center}
\includegraphics[width=0.99\linewidth]{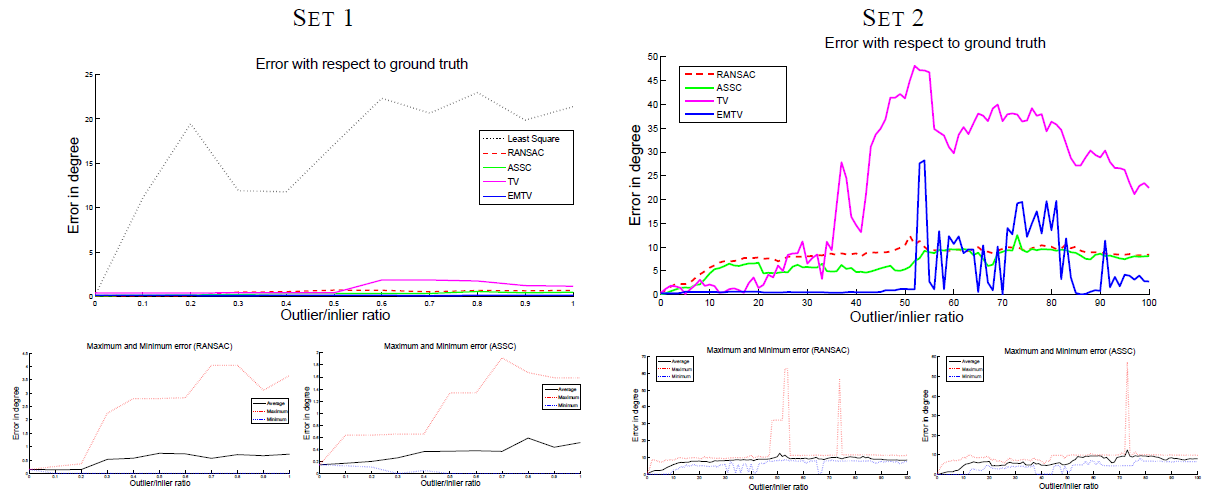} 
\end{center}
\caption{\small Error plots for {\sc Set 1} (OI ratio $=[0.1, 1]$, up to 50\% outliers)
and {\sc Set 2}  (OI ratio $=[1,100]$, $\ge$ 50\% outliers). Left:
for {\sc Set 1}, all the tested methods except the least-squares demonstrated reliable results. 
EMTV is deterministic and converges quickly, capable of
correcting Gaussian noise inherent in the inliers and
rejecting spurious outliers, and resulting in the almost-zero
error curve.
Right: for {\sc Set 2}, EMTV still has an almost-zero error curve
up to an OI ratio of 51 ($\simeq$ 98.1\% outliers).
We ran 100 trials in RANSAC and ASSC and averaged the results.
The maximum and minimum errors of RANSAC and ASSC
are shown below each error plot.
}
\label{fig:plot_small_noise}
\label{fig:plot_large_noise}
\vspace{-0.1in}
\end{figure*}

The input scale, which is used in RANSAC, TV and EMTV, was estimated automatically
by TSSE proposed in~\cite{wang_and_suter}.  Note in principle these scales are
not the same, because TSSE estimates the scales of residuals in the normal space. Therefore,
the scale estimated by TSSE used in TV and EMTV are only
approximations. As we will demonstrate below, even with such rough
approximations, EMTV still performs very well showing that it is not
sensitive to scale inaccuracy, a nice property of tensor voting
which will be shown in an experiment to be detailed shortly.  
Note that ASSC~\cite{wang_and_suter} does not require any input scale.

\begin{figure*}[t]
\begin{center}
\includegraphics[width=0.85\linewidth]{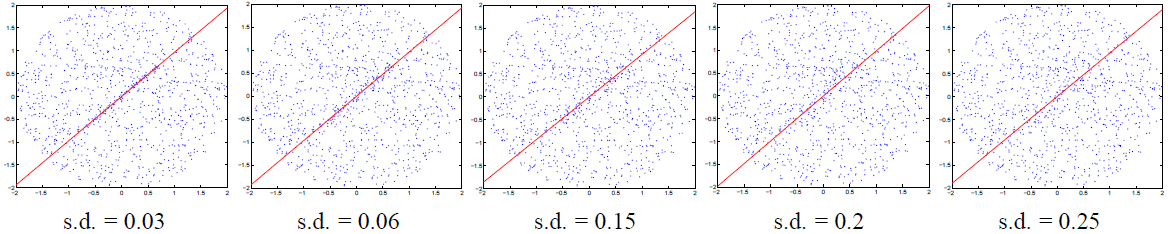}
\end{center}
\caption{\small Inputs containing various measurement errors, with OI ratio = 10 and fixed
outliers location. The estimated models (depicted by the red lines) obtained using EMTV are overlayed on
the inputs. Notice the line cluster becomes less salient when s.d. = 0.25. }
\label{fig:varying_sd_input}
\vspace{-0.2in}
\end{figure*}

{\sc Set 1} -- Refer to the {\em left} of
Fig.~\ref{fig:plot_small_noise} which shows the error produced
by various methods tested on {\sc Set 1}. The error is measured by
the angle between the estimated line and the ground-truth.
Except the least squares  method,
we observe that all the tested methods (RANSAC, ASSC, TV and
EMTV) performed very well with OI ratios $\leq 1$. For RANSAC and
ASSC, all the detected inliers were finally used in parameter
estimation. Note that the errors measured for RANSAC and ASSC were
the average errors in 100 executions\footnote{We
executed the algorithm 100 times. In each execution, iterative 
random sampling was done where the desired probability of
choosing at least one sample free from outliers was set to $0.99$
(default value). 
}, 
Fig.~\ref{fig:plot_small_noise} also shows the
maximum and minimum errors of the two methods after running 100
trials.  EMTV does not have such maximum
and minimum error plots because it is deterministic.

Observe that the errors produced by our method are almost zero in {\sc Set 1}.
EMTV is deterministic and converges quickly, capable of
correcting Gaussian noise inherent in the inliers and
rejecting spurious outliers, and resulting in the almost-zero
error curve. RANSAC and ASSC have error $< 0.6^\circ$ , which is
still very acceptable.

\begin{figure}[t]
\begin{center}
\includegraphics[width=0.95\linewidth]{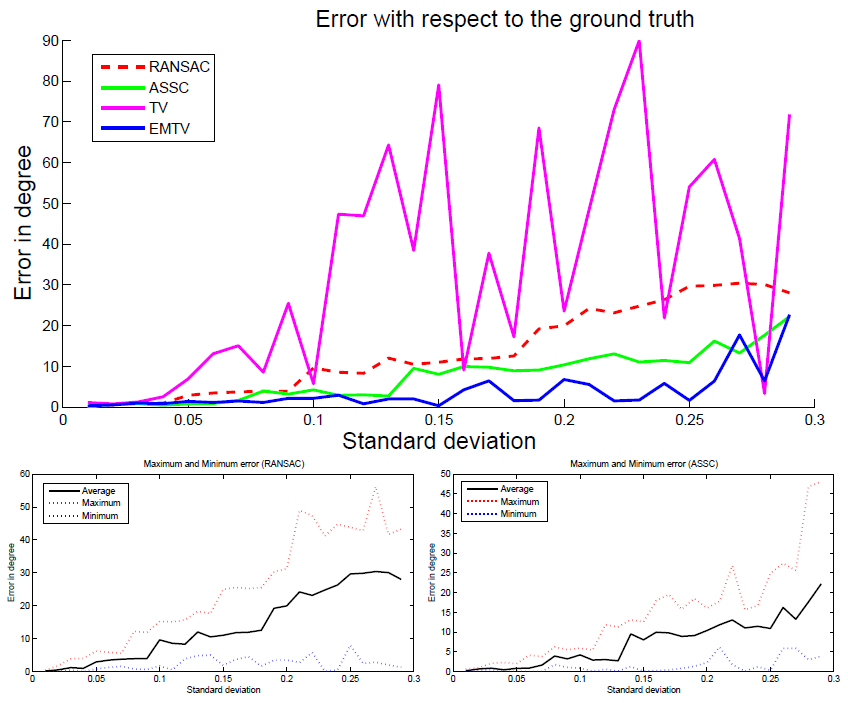} 
\end{center}
\caption{\small Measurement error: standard deviation varies from 0.01 to 0.29 with OI ratio at 10.  }
\label{fig:varying_sd_results}
\vspace{-0.2in}
\end{figure}

{\sc Set 2 } -- Refer to the {\em right} of Fig.~\ref{fig:plot_large_noise} which shows the result for {\sc Set 2}, from which we can distinguish the performance of the methods. TV breaks down at OI ratios $\geq 20$. After that, the performance of TV is unpredictable. EMTV breaks down at OI ratios $\ge$ 51, showing greater robustness than TV in this experiment due to the EM parameter fitting procedure.

The performance of RANSAC and ASSC were quite stable where the average errors
are within 4 and 7 degrees over the whole spectrum of OI ratios considered.
The maximum and minimum errors are shown in the bottom of
Fig.~\ref{fig:plot_large_noise}, which shows that they can be very large at
times. EMTV produces almost zero errors with OI ratio $\leq 51$, but then
breaks down with unpredictable performance.
From the experiments on {\sc Set 1} and {\sc Set 2} we conclude that
EMTV is robust up to an OI ratio of 51 ($\simeq$98.1\% outliers).

\noindent {\bf Insensitivity to choice of scale}.
We studied the errors produced by EMTV with different scales
$\sigma_d$ (Eqn.~(\ref{eqn:neighbor_weighting_Gaussian})), given
OI ratio of 10 ($\simeq$91\% outliers). Even in the presence of many
outliers, EMTV broke down only when $\sigma_d \simeq 0.7$
(the ground-truth $\sigma_d$ is 0.1), which indicates that our method is
not sensitive to large deviations of scale. Note that the scale parameter
can sometimes be automatically estimated (e.g., by modifying the original TSSE
to handle tangent space) as was done in the previous experiment.

\noindent {\bf Large measurement errors}.
In this experiment, we increased the measurement error by increasing the
standard deviation (s.d.) from 0.01
to 0.29, while keeping OI ratio equal to 10 and the location of the
outliers fixed. Some of the input data sets are depicted
in Fig.~\ref{fig:varying_sd_input}, showing that the inliers
are less salient as the standard deviation (s.d.) increases. A similar experiment was also
performed in~\cite{meer_bookchapter}.
Again, we compared our method with
RANSAC, ASSC and TV.

According to the error plot in the {\em top} of
Fig.~\ref{fig:varying_sd_results},
TV is very sensitive to the change of s.d.: when the
s.d. is greater than 0.03, the performance is unpredictable.
With increasing s.d., the performance of RANSAC and
ASSC degrade gracefully while ASSC always outperforms
RANSAC.  The {\em bottom} of Fig.~\ref{fig:varying_sd_results}
shows the corresponding maximum and minimum error
in 100 executions.

On the other hand, we observe the performance of EMTV
(with $\sigma_d = 0.05$) is extremely steady and accurate
when s.d. $<0.15$. After that, although its error plot
exhibits some perturbation, the errors produced are
still small and the performance is quite stable compared
with other methods.

\begin{figure}[t]
\begin{center}
\includegraphics[width=0.8\linewidth]{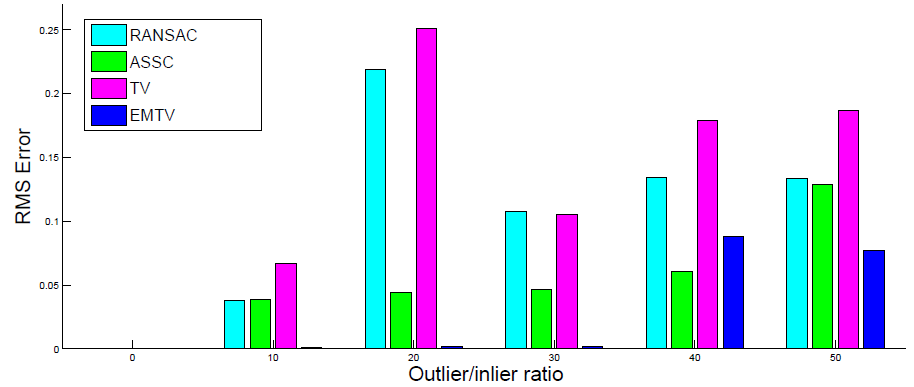}
\end{center}
\vspace{-0.1in}
\caption{\small {\em Corridor}. 
RMS error plot of various methods. }
\label{fig:epipolar_line}
\vspace{-0.15in}
\end{figure}

\begin{figure*}[t]
\begin{center}
\includegraphics[width=0.85\linewidth]{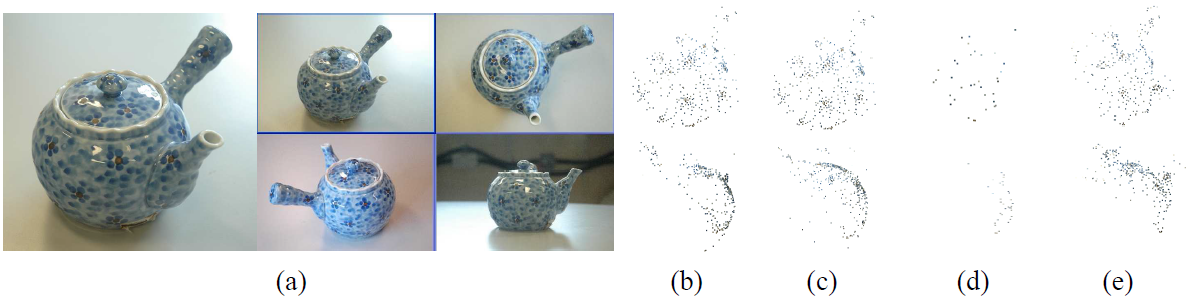}
\end{center}
\vspace{-0.1in}
\caption{\small
 {\em Teapot}: (a) 4 images (one in enlarged view) from the input image
 set consisting of 30 images captured around the object in a casual manner.
 (b)--(f) show two views of the sparse reconstruction generated by using 
{\tt KeyMatchFull} (398 points), 
{\tt linear\_match} (493 points),
{\tt ransac\_match} (37 points), 
{\tt assc\_match} (208 points), and
{\tt emtv\_match} (2152 points). 
 The candidate matches returned by SIFT are extremely noisy due to the
 ambiguous patchy patterns.
 On average 17404 trials were run in {\small {\tt ransac\_match}}.
 It is time consuming to run more trials on this noisy and large
 input where an image pair can have as many as 5000 similar matches.
 Similarly for {\tt assc\_match} where additional running time is needed
 to estimate the scale parameter in each iteration. On the other hand,
 {\tt emtv\_match} does not require any random sampling.
}
\label{fig:teapot}
\vspace{-0.1in}
\end{figure*}

\subsection{Fundamental Matrix Estimation}
\label{sec:example2}


Given an image pair with $p \ge 8$ correspondences
${\cal P} = \{ (\mathbf{u}_i, \mathbf{u'}_i) | 8 \leq i \leq p \} $,
the goal is to estimate the
$3 \times 3$ fundamental matrix $\mathbf{F}= [f]_{a,b}$,
where $a,b \in \{ 1,2,3 \}$, such that
\begin{equation}
\label{eqn:essential} \mathbf{u'}_i^T \mathbf{F} \mathbf{u}_i = 0
\end{equation}
for all $i$. $\mathbf{F}$ is of rank 2. Let $\mathbf{u} = (u,v,1)^T$ and
$\mathbf{u'} = (u',v',1)$, Eqn.~(\ref{eqn:essential}) can be rewritten
into:
\begin{equation}
\label{eqn:essential_rearrange} \mathbf{U}_i^T \mathbf{h} = 0
\end{equation}
where
\begin{eqnarray}
\nonumber \mathbf{U} &=& (uu', uv', u, vu', vv' v, u', v', 1)^T \\
\nonumber \mathbf{v} &=& (f_{11}, f_{21}, f_{31}, f_{12}, f_{22}, f_{32}, f_{13}, f_{23}, f_{33} )^T
\end{eqnarray}

Noting that Eqn.~(\ref{eqn:essential_rearrange}) is a simple plane
equation, if we can detect and handle noise and outliers in the feature space,
Eqn.~(\ref{eqn:essential_rearrange}) should enable us to produce
a good estimation.
Finally, we apply~\cite{hartley_defence} to obtain 
a rank-2 fundamental matrix. Data normalization is similarly
done as in~\cite{hartley_defence} before the optimization.


We evaluate the results by estimating the fundamental matrix
of the data set {\em Corridor}, which is available at
www.robots.ox.ac.uk/$\sim$vgg/data.html. The matches
of feature points (Harris corners) are available. Random outliers
were added in the feature space. 

Fig.~\ref{fig:epipolar_line} shows the plot of RMS error, which
is computed by summing up and averaging
$\sqrt{\frac{1}{p}\sum_i|| \mathbf{U}_i^T \hat{\mathbf{h}}||^2}$
over all pairs,
where $\mathbf{U}_i$ is the set of clean data, and $\hat{\mathbf{h}}$
is the 9D vector produced from the rank-2 fundamental matrices
estimated by various methods. Note that all the images available
in the {\em Corridor} data set are used, that is, all $C^{11}_2$
pairs were tested.
It can be observed
that RANSAC breaks down at an OI ratio $\simeq 20$, or 95.23\% outliers.
ASSC is very stable with RMS error $< 0.15$. TV breaks down at
an OI ratio $\simeq 10$. EMTV has negligible RMS error before it
starts to break down at an OI ratio $\simeq 40$. This finding
echoes that of~\cite{hartley_defence} that linear solution is sufficient
when outliers are properly handled.

\begin{figure}[t]
\begin{center}
\includegraphics[width=0.75\linewidth]{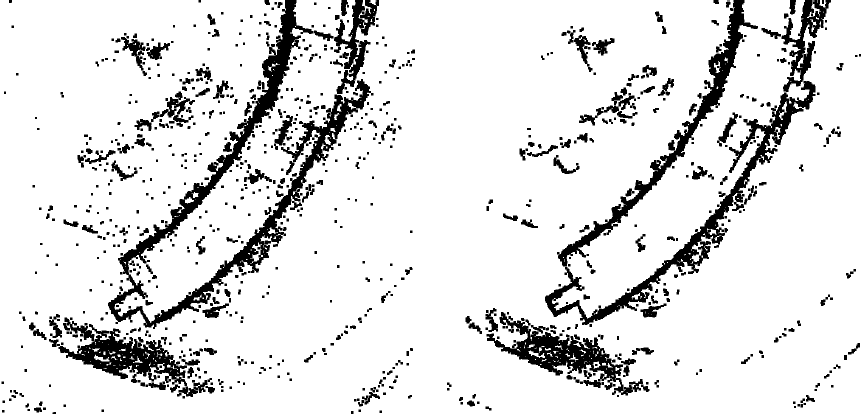}
\end{center}
\caption{\small Results before and after filtering of {\em Hall 3} (images shown
in Fig.~\ref{fig:hall3_recon}). 
All salient 3D structures are retained in the filtered
result, including the bushes near the left facade and planters near the
right facade in this top view of the building.}
\label{fig:filtering}
\vspace{-0.2in}
\end{figure}

\begin{figure*}[t]
\begin{center}
\includegraphics[width=0.9\linewidth]{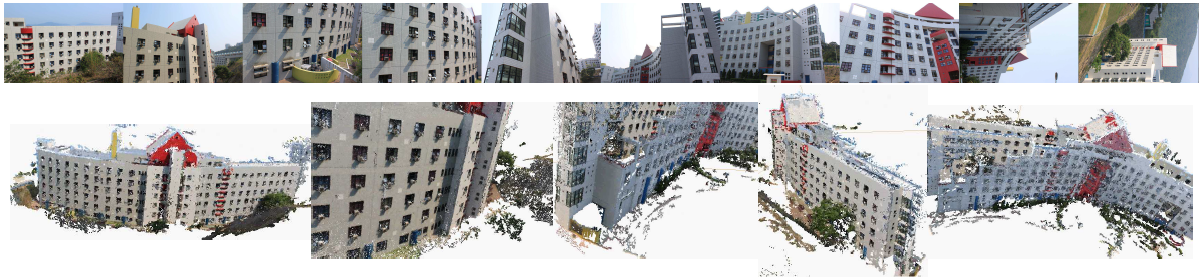}
\end{center}
\vspace{-0.2in}
\caption{\small The {\em Hall 3} reconstruction: ten of the input images (top) and five views of the quasi-dense 3D reconstruction (bottom).}
\label{fig:hall3_recon}
\vspace{-0.2in}
\end{figure*}

\subsection{Matching}
\label{sec:teapot}
In the uncalibrated scenario, EMTV estimates parameter accurately by employing
CFTV, and effectively discards epipolar geometries induced by wrong matches.
Typically, camera calibration is performed using nonlinear least-squares
minimization and bundle adjustment~\cite{lourakis_ba} which requires good
matches as input.  In this experiment, candidate matches
are generated by comparing the resulting 128D SIFT feature vectors~\cite{sift}, 
so many matched keypoints are not corresponding.

The epipolar constraint is enforced in
the matching process using EMTV, which returns the fundamental
matrix {\em and} the probability $w_i$ (Eqn~(\ref{eqn:e_step_update}))
of a keypoint pair $i$ being an inlier. In the experiment,
we assume keypoint pair $i$ is an inlier if $w_i > 0.8$.
\if 0
Note that we did not use any random sampling.
The following compares {\small {\tt emtv\_match}} with
{\small {\tt KeyMatchFull}}~\cite{snavely_ijcv07},
{\small {\tt assc\_match}}~\cite{wang_and_suter},
{\small {\tt ransac\_match}}, and
{\small {\tt linear\_match}},
where the latter three perform hyper-plane fitting by using
ASSC, RANSAC, and least-squares respectively.

\begin{figure}[t]
\begin{center}
\begin{tabular}{@{\hspace{0mm}}c@{\hspace{0mm}}c@{\hspace{0mm}}c}
\includegraphics[width=0.30\linewidth]{figure/LongJing/TeaBoxUnd2.bmp.eps} &
\includegraphics[width=0.32\linewidth]{figure/LongJing/original_tea.bmp.eps} &
\includegraphics[width=0.32\linewidth]{figure/LongJing/EM_tea.bmp.eps} \\
{\small (a) } & {\small (b)} & {\small (c)} \\
\end{tabular}
\end{center}
\caption{\small {\em Tea Can}: (a) One of the two input images. (b) Sparse
reconstruction generated by using {\small {\tt KeyMatchFull}}. (c) Sparse
reconstruction generated by using {\small {\tt emtv\_match}}. }
\label{fig:longjing}
\vspace{-0.2in}
\end{figure}

\noindent {\em Tea Can}.  Fig.~\ref{fig:longjing} shows
that, by using our filtered matches, even in the absence
of any focal length input, our sparse reconstruction of
the tea can (the image pair was obtained from~\cite{zzhang_pami}),
produced by the nonlinear least-squares minimization and
bundle adjustment~\cite{lourakis_ba}, is denser and contains
less errors as compared with~\cite{snavely_ijcv07}, where
we can faithfully reconstruct the right-angled container.

\noindent {\em Teapot}. 
\fi
Fig.~\ref{fig:teapot} shows
our running example {\em teapot} which contains
repetitive patterns across the whole object. Wrong matches
can be easily produced by similar patterns on different parts
of the teapot. This data set contains 30 images captured
using a Nikon D70 camera. Automatic configuration was set
during the image capture.
Visually, the result produced using {\small {\tt emtv\_match}}
is much denser than the results produced with
{\small {\tt KeyMatchFull}}~\cite{snavely_ijcv07}, 
{\small {\tt linear\_match}},
{\small {\tt assc\_match}}~\cite{wang_and_suter}, 
and {\small {\tt ransac\_match}}.
Note in particular that only {\small {\tt emtv\_match}}
recovers the overall geometry of the teapot, whereas
the other methods can only recover one side of the teapot.

This example is challenging, because the teapot's shape is quite symmetric
and the patchy patterns look identical everywhere. 
As was done in~\cite{snavely_ijcv07},
each photo was paired respectively with a number of photos with
camera poses satisfying certain basic criteria conducive to matching
or making the numerical process stable
(e.g. wide-baseline stereo). We can regard this pair-up process
as one of computing connected components. If the fundamental matrix
between any successive images are incorrectly estimated,
the corresponding components will no longer be connected,
resulting in the situation that only one side or part of the
object can be recovered.

Since {\small {\tt KeyMatchFull}} and {\small {\tt linear\_match}}
use simple distance measure for finding matches, the coverage of
the corresponding connected components tend to be small. 
It is interesting to note that the worst result is produced by using
{\small {\tt ransac\_match}}. This can be attributed to three reasons:
(1) the fundamental matrix is of rank 2 which implies that $\bv$
spans a subspace $\leq$ 8-D rather than a 9-D hyperplane;
(2) the input matches contain too many outliers for some image pairs;
(3) it is not feasible to fine tune the scale parameter for
every possible image pair and so we used a single value for all of the images.
A slight improvement could be found from
ASSC. However, it still suffers from problems (1) and (2) and so the
result is not very good even compared with
{\small {\tt KeyMatchFull}} and {\small {\tt linear\_match}}.


On the other hand, {\small {\tt emtv\_match}} utilizes the epipolar
geometry constraint by computing the fundamental matrix in a data
driven manner. Since the outliers are effectively filtered out,
the estimated fundamental matrices are sufficiently accurate
to pair up all of the images into a single connected component. 
Thus, the overall 3D geometry can be recovered from all the available views.

\if 0
\begin{figure*}[!ht]
  \begin{center}
  \begin{tabular}{@{\hspace{-0.05in}}c@{\hspace{1mm}}c@{\hspace{1mm}}c@{\hspace{1mm}}c@{\hspace{1mm}}c}
  \includegraphics[width=0.14\linewidth]{figure/earth/earth_view1.bmp.eps} &
  \includegraphics[width=0.14\linewidth]{figure/earth/earth2_pmvs_normal_view0.bmp.eps} &
  \includegraphics[width=0.14\linewidth]{figure/earth/earth2_tmvs_normal_view0.bmp.eps} &
  \includegraphics[width=0.17\linewidth]{figure/earth/earth2_pmvs_view0.bmp.eps} &
  \includegraphics[width=0.17\linewidth]{figure/earth/earth2_tmvs_view0.bmp.eps} \\
(a) & (b) & (c) & (d) & (e)
  \end{tabular}
  \end{center}
  \caption{\small {\em Earth}. (a) an input image (81 in total), (b) and (c): zoom-in
   views of the normal reconstruction produced by PMVS and TMVS after the propagation
   step, (d) and (e) show respectively one view of the quasi-dense reconstruction by
  PMVS and TMVS.}
  \label{fig:earth}
\vspace{-0.1in}
\end{figure*}
\fi

\subsection{Multiview Stereo Reconstruction}
\label{sec:mvs}
This section outlines how CFTV and EMTV are applied to 
improve the match-propagate-filter pipeline in multiview stereo.
Match-propagate-filter is a competitive approach to 
multiview stereo
reconstruction for computing a (quasi) dense representation. Starting
from a sparse set of initial matches with high confidence, matches
are propagated using photoconsistency to produce a (quasi) dense
reconstruction of the target shape. Visibility consistency can be
applied to remove outliers.
Among the existing works using the match-propagate-filter
approach, patch-based multiview stereo (or PMVS) proposed
in~\cite{furukawa_pami09} has produced some best results to date. 

We observe that PMVS had not fully utilized the 3D information
inherent in the sparse and dense geometry before, during 
and after propagation, as patches do not adequately communicate 
among each other. As noted in~\cite{furukawa_pami09}, data communication 
should not be
done by smoothing, but the lack of communication will cause
perturbed surface normals and patch outliers during the propagation 
stage.  In~\cite{tmvs}, we proposed tensor-based multiview stereo (TMVS) 
and used 3D structure-aware tensors which communicate among each other 
via CFTV.  We found that such tensor communication not only improves 
propagation in MVS without undesirable
smoothing but also benefits the entire match-propagate-filter pipeline
within a unified framework. 

We captured 179 photos around a building which 
were first calibrated as described in section~\ref{sec:teapot}.
All images were taken on the ground
level not higher than the building, so we have very few samples of the
rooftop. The building facades are curved and the windows on the building
look identical to each other. The patterns on the front and back facade
look nearly identical. These ambiguities cause significant challenges
in the matching stage especially
for wide-baseline stereo. TMVS was run to obtain the quasi-dense reconstruction,
where MRFTV was used to filter outliers as shown in Fig.~\ref{fig:filtering}.
Fig.~\ref{fig:hall3_recon} shows the 3D reconstruction which is faithful
to the real building.
Readers are referred to~\cite{tmvs} for more detail and experimental
evaluation of TMVS.

\if 0

\begin{figure}[t]
\begin{center}
\begin{tabular}{
@{\hspace{0mm}}c@{\hspace{0mm}}c
@{\hspace{0mm}}c@{\hspace{0mm}}c
}
\includegraphics[height=1.5in]{figure/tripp/tripp_image.eps} &
\includegraphics[height=1.5in]{figure/tripp/tripp_pmvs_view0.bmp.eps} &
\includegraphics[height=1.5in]{figure/tripp/tripp_tmvs_view0.bmp.eps} &
\end{tabular}
\end{center}
\vspace{-0.1in}
\caption{\small {\em Tripp} reconstruction from sparse data set:
three input images (left) and the quasi-dense 3D reconstruction
produced by PMVS (middle) and TMVS (right).}
\label{fig:tripp}
\begin{center}
\includegraphics[width=0.99\linewidth]{figure/george/george_image.eps}
\begin{tabular}{
@{\hspace{0.2mm}}c@{\hspace{0.2mm}}c
@{\hspace{0.2mm}}c@{\hspace{0.2mm}}c
@{\hspace{0.2mm}}c@{\hspace{0.2mm}}c
@{\hspace{0.2mm}}c@{\hspace{0.2mm}}c
@{\hspace{0.2mm}}c@{\hspace{0.2mm}}c
}
\includegraphics[height=1in]{figure/george/george_view0.bmp.eps} &
\includegraphics[height=1in]{figure/george/george_view1.bmp.eps} &
\includegraphics[height=1in]{figure/george/george_view3.bmp.eps} &
\includegraphics[height=1in]{figure/george/george_view2.bmp.eps}
\end{tabular}
\end{center}
\vspace{-0.1in}
\caption{\small {\em George} reconstruction from sparse data set:
five input images (top) and four views of the quasi-dense
3D reconstruction (bottom).}
\label{fig:george}
\vspace{-0.1in}
\end{figure}
\fi

\section{Conclusions}
A closed-form solution is proved for the special theory of tensor voting
(CFTV) for computing an exact structure-aware tensor in any dimensions.
For structure propagation, we derive a quadratic energy for MRFTV, thus 
providing a convergence proof for tensor voting which is impossible 
to prove using the original tensor voting procedure. Then, 
we derive EMTV for optimizing both the tensor and model parameters for 
robust parameter estimation. We performed quantitative and qualitative 
evaluation using challenging synthetic and real data sets.  
In the future we will develop a closed-form solution for the general theory of tensor voting, and extend EMTV to extract 
multiple and nonlinear structures.  We have provided C$++$ source code, 
but it is straightforward to implement 
Eqns~(\ref{eqn:cal_S}), (\ref{eqn:cal_S_inv}), 
(\ref{eqn:e_step_update}), (\ref{eqn:m_step_update}), 
(\ref{eqn:energy_update_rule}), and (\ref{eqn:sor_update_rule}).
We demonstrated promising results in multiview stereo, and will 
apply our closed-form solution to address important computer 
vision problems.

\ifCLASSOPTIONcompsoc
  \section*{Acknowledgments}
\else
  \section*{Acknowledgment}
\fi

The authors would like to thank the Associate Editor and all of the
anonymous reviewers. Special thanks go to Reviewer 1 for his/her 
helpful and detailed comments throughout the review cycle. 

\ifCLASSOPTIONcaptionsoff
  \newpage
\fi



{\small
\bibliographystyle{ieee}
\bibliography{cftv}
}

\vspace{-0.5in}

\begin{IEEEbiography}
[{\includegraphics[width=1\linewidth]{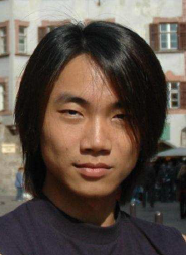}}] 
{Tai-Pang Wu} 
received the PhD degree in computer science from the Hong
Kong University of Science and Technology (HKUST) in 2007. He was
awarded the Microsoft Fellowship in 2006. After graduation, he has
been employed as a postdoctoral fellow in Microsoft Research Asia
Beijing (2007--2008) and the Chinese University of Hong Kong
(2008--2010). He is currently a Senior Research at the Enterprise and 
Consumer Electronics Group of the Hong Kong Applied Science 
and Technology Research Institute (ASTRI). 
His research interests include computer vision and computer graphics. 
He is a member of the IEEE and the IEEE Computer
Society.
\end{IEEEbiography}

\vspace{-0.5in}
\begin{IEEEbiography} 
[{\includegraphics[width=1\linewidth]{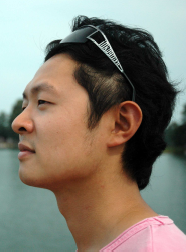}}] 
{Sai-Kit Yeung} received his PhD degree in electronic and computer
engineering from the Hong Kong University of Science and Technology
(HKUST) in 2009. He received his BEng degree (First Class Honors) in
computer engineering and MPhil degree in bioengineering from HKUST in
2003 and 2005 respectively. 
He is currently an Assistant Professor at the Singapore University 
of Technology and Design (SUTD).
Prior to joining SUTD, he was a Postdoctoral Scholar in the Department
of Mathematics, University of California, Los Angeles (UCLA) in 2010.
His research interests include computer vision, computer graphics,
computational photography, image/video processing, and medical
imaging. He is a member of IEEE and IEEE Computer Society.
\end{IEEEbiography}

\begin{IEEEbiography}
[{\includegraphics[width=1\linewidth]{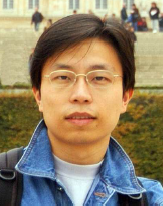}}] 
{Jiaya Jia} 
Jiaya Jia received the PhD degree in computer science from Hong Kong
University of Science and Technology in 2004 and is currently an associate
professor in Department of Computer Science and Engineering at the Chinese
University of Hong Kong (CUHK). He was a visiting scholar at Microsoft
Research Asia from March 2004 to August 2005 and conducted collaborative
research at Adobe Systems in 2007. He leads the research group in CUHK,
focusing on computational photography, 3D reconstruction, practical
optimization, and motion estimation. He serves as an associate editor for
TPAMI and as an area chair for ICCV 2011. He was on the program committees
of several major conferences, including ICCV, ECCV, and CVPR, and co-chaired
the Workshop on Interactive Computer Vision in conjunction with ICCV 2007.
He received the Young Researcher Award 2008 and Research Excellence Award
2009 from CUHK. He is a senior member of the IEEE.
\end{IEEEbiography}

\vspace{-1.0in}

\begin{IEEEbiography}
[{\includegraphics[width=1\linewidth]{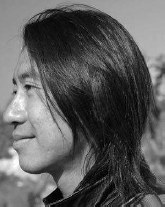}}] 
{Chi-Keung Tang}
received the MSc and PhD degrees in
computer science from the University of Southern California
(USC), Los Angeles, in 1999 and 2000, respectively.
Since 2000, he has been with the Department of
Computer Science at the Hong Kong University of Science and
Technology (HKUST) where he is currently a professor. He is an 
adjunct researcher at the Visual Computing Group of Microsoft
Research Asia.  His research areas are computer vision, computer
graphics, and human-computer interaction. He is an associate editor
of IEEE Transactions on Pattern Analysis and Machine Intelligence (TPAMI), 
and on the editorial board of International Journal of Computer Vision (IJCV).
He served as an area chair for ACCV 2006 (Hyderabad), ICCV 2007
(Rio de Janeiro), ICCV 2009 (Kyoto), ICCV 2011 (Barcelona), 
and as a technical papers committee member for the inaugural 
SIGGRAPH Asia 2008 (Singapore), SIGGRAPH 2011 (Vancouver), and 
SIGGRAPH Asia 2011 (Hong Kong), SIGGRAPH 2012 (Los Angeles).  
He is a senior member of the IEEE and the IEEE Computer Society.
\end{IEEEbiography}

\vspace{-1.0in}
\begin{IEEEbiography}
[{\includegraphics[width=1\linewidth]{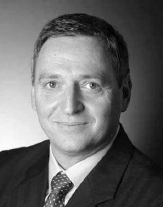}}] 
{G\'{e}rard Medioni} 
received the Dipl\^{o}me d'Ing\'{e}nieur
from the \'{E}cole Nationale Sup\'{e}rieure des
T\'{e}l\'{e}communications (ENST), Paris, in 1977 and
the MS and PhD degrees from the University of
Southern California (USC) in 1980 and 1983,
respectively. He has been with USC since then
and is currently a professor of computer science
and electrical engineering, a codirector of the
Institute for Robotics and Intelligent Systems
(IRIS), and a codirector of the USC Games
Institute. He served as the chairman of the Computer Science
Department from 2001 to 2007. He has made significant contributions
to the field of computer vision. His research covers a broad spectrum of
the field, such as edge detection, stereo and motion analysis, shape
inference and description, and system integration. He has published
3 books, more than 50 journal papers, and 150 conference articles. He
is the holder of eight international patents. He is an associate editor of
the Image and Vision Computing Journal, Pattern Recognition and
Image Analysis Journal, and International Journal of Image and Video
Processing. He served as a program cochair of the 1991 IEEE
Computer Vision and Pattern Recognition (CVPR) Conference and
the 1995 IEEE International Symposium on Computer Vision, a general
cochair of the 1997 IEEE CVPR Conference, a conference cochair of
the 1998 International Conference on Pattern Recognition, a general
cochair of the 2001 IEEE CVPR Conference, a general cochair of the
2007 IEEE CVPR Conference, and a general cochair of the upcoming
2009 IEEE CVPR Conference. He is a fellow of the IEEE, IAPR, and
AAAI and a member of the IEEE Computer Society.
\end{IEEEbiography}

\ifCLASSOPTIONcompsoc
\else
\fi
%

%
\ifCLASSINFOpdf
\else
\fi
\hyphenation{op-tical net-works semi-conduc-tor}


%
%
%
%
%

\author{
        Tai-Pang~Wu,~
	Sai-Kit~Yeung,~
        Chi-Keung~Tang$^*$\thanks{Please direct all inquiries to the contact author. E-mail: cktang@cse.ust.hk},

        Jiaya Jia~
        and~Gerard~Medioni
\if 0
\IEEEcompsocitemizethanks{ 
\IEEEcompsocthanksitem T.-P.~Wu and C.-K.~Tang are with the Department of Computer Science
and Engineering, Hong Kong University of Science and Technology, Clear Water Bay,
Hong Kong. Email: \{pang,cktang\}@cs.ust.hk. 
\IEEEcompsocthanksitem S.-K. Yeung is with the Pillar of Information Systems
Technology and Design, Singapore University of Technology and Design, Singapore. Email: saikit@sutd.edu.sg.
\IEEEcompsocthanksitem J.~Jia is with the Department of Computer Science 
and Engineering, Chinese University of Hong Kong. Email: leojia@cse.cuhk.edu.hk.
\IEEEcompsocthanksitem G.~Medioni is with the Department of Computer Science, 
University of Southern California. Email: medioni@iris.usc.edu.
}
\fi
}%

\if 0

\author{Chi-Keung~Tang~
\IEEEcompsocitemizethanks{
\IEEEcompsocthanksitem C.-K. Tang is with the Department of Computer Science
and Engineering, Hong Kong University of Science and Technology, Clear Water Bay,
Hong Kong. Email: cktang@cs.ust.hk.}
}
\fi

\if 0
\newcommand{\bS}[0]{{\mathbf{S}}}
\newcommand{\bK}[0]{{\mathbf{K}}}
\newcommand{\bT}[0]{{\mathbf{T}}}
\newcommand{\bI}[0]{{\mathbf{I}}}
\newcommand{\bA}[0]{{\mathbf{A}}}
\newcommand{\ttn}{\mathbf{\tilde{n}}}
\newcommand{\tn}{\mathbf{n}}
\newcommand{\br}{\mathbf{r}}
\newcommand{\bN}{\mathbf{N}}
\newcommand{\bx}[0]{{\mathbf{x}}}
\newcommand{\bv}[0]{{\mathbf{h}}}   
\newcommand{\bG}[0]{{\mathbf{G}}}
\newcommand{\bO}[0]{{\mathbf{O}}}
\newcommand{\bR}[0]{{\mathbf{R}}}
\newcommand{\bM}[0]{{\mathbf{M}}}
\newcommand{\bB}[0]{{\mathbf{B}}}
\newcommand{\bE}[0]{{\mathbf{E}}}
\newcommand{\bb}[0]{{\mathbf{b}}}
\newcommand{\bX}[0]{{\mathbf{X}}}
\newcommand{\bff}[0]{{\mathbf{f}}}

\fi


\IEEEcompsoctitleabstractindextext{%
\begin{abstract}
\boldmath
We respond to the comments paper~\cite{comment} on the proof to
the closed-form solution to tensor voting~\cite{cftv} or CFTV.
First, the proof is correct and let $\bS$ be the resulting tensor
which may be asymmetric.  Second, $\bS$ should be interpreted using
singular value decomposition (SVD), where the symmetricity of $\bS$
is unimportant, because the corresponding eigensystems
to the positive semidefinite (PSD) systems, namely, $\bS \bS^T$ or $\bS^T \bS$, 
are used in practice.  Finally, we prove a symmetric version of 
CFTV, run extensive simulations and show that the original 
tensor voting, the asymmetric CFTV and symmetric CFTV produce 
{\em practically the same}
empirical results in tensor direction except
in high uncertainty situations due to ball tensors and low saliency.
\end{abstract}

}

\maketitle

\IEEEdisplaynotcompsoctitleabstractindextext

%
\IEEEpeerreviewmaketitle

\setcounter{section}{0}
\setcounter{theorem}{0}
\setcounter{footnote}{0}

\section*{Addendum}
A closed-form solution to tensor voting or CFTV was proved in~\cite{cftv}.  
With CFTV, discrete voting field is no longer required where uniform
sampling, computation and storage efficiency are issues in high dimensional 
inference.  


We respond to the comments paper~\cite{comment} on the proof to
the closed-form solution to tensor voting~\cite{cftv} or CFTV.
First, the proof is correct and let $\bS$ be the resulting tensor
which may be asymmetric.  Second, $\bS$ should be interpreted using
singular value decomposition (SVD), where the symmetricity of $\bS$
is unimportant, because the corresponding eigensystems
to the positive semidefinite (PSD) systems, namely, $\bS \bS^T$ or $\bS^T \bS$, 
are used in practice.  Finally, we prove a symmetric version of 
CFTV, run extensive simulations and show that the original 
tensor voting, the asymmetric CFTV and symmetric CFTV produce 
{\em practically the same}
empirical results in tensor direction except
in high uncertainty situations due to ball tensors and low saliency.

Dirty codes for the experimental section to show the practical equivalence 
of the asymmetric CFTV, symmetric CFTV, and original discrete tensor voting 
(that is, Eq.~(2)) in 2D are available\footnote{Reader may however find it easier 
to implement the closed form solutions on their own and generate the discrete voting 
fields for direct comparison; 2D codes for generating discrete voting fields using 
the equation right after ``So we have" and before the new integration by parts 
introduced in~\cite{cftv} are available upon request.}.

\section{Asymmetric CFTV}
We reprise here the main result in~\cite{cftv} in an equivalent form:
\begin{theorem}
\emph{(Closed-Form Solution to Tensor Voting)} 
\label{thm:cftv}
The tensor vote at $\bx_i$ induced by $\bK_j$ located at $\bx_j$ is given by
the following closed-form solution:
\begin{equation}
\bS_{ij} = c_{ij} \bR_{ij} \left( \bK_j - \frac{1}{2} \bK_j \br_{ij}\br_{ij}^T \right) \bR^T_{ij}
\label{eq:cftv}
\end{equation}
where $\bK_j$ is a second-order symmetric tensor,
$\bR_{ij} ={\mathbf I} - 2 {\mathbf r}_{ij} {\mathbf r}_{ij}^T$,
${\mathbf I}$ is an identity, 
${\mathbf r}_{ij}$ is a unit vector pointing from
$\bx_j$ to $\bx_i$ and $c_{ij} = \exp( - \frac{||\bx_i - \bx_j ||^2 }{ \sigma_d })$
with $\sigma_d$ as the scale parameter.
\end{theorem}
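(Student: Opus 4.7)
The plan is to recognize that the restated Theorem~1 is algebraically equivalent to the original closed-form identity proved earlier in the paper: expanding $\bR_{ij}\left(\bK_j - \tfrac{1}{2}\bK_j \br_{ij}\br_{ij}^T\right)\bR_{ij}^T$ and using $\bR_{ij}^T=\bR_{ij}$ together with the definition $\bR'_{ij}=(\mathbf{I}-\tfrac{1}{2}\br_{ij}\br_{ij}^T)\bR_{ij}$ gives back $c_{ij}\bR_{ij}\bK_j\bR'_{ij}$. So my task reduces to rederiving that earlier identity, and I will simply replay the argument while being careful that the intermediate form $\bK_j(\mathbf{I}-\tfrac{1}{2}\br_{ij}\br_{ij}^T)$ is precisely the object sitting between $\bR_{ij}$ and $\bR_{ij}^T$.

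First I would set $\br=\br_{ij}$, $\bN_\theta=\tn_\theta\tn_\theta^T$, write $\bR=\mathbf{I}-2\br\br^T$, and substitute the osculating-arc expression ${\bf v}_\theta=(\tn_\theta-2\br(\br^T\tn_\theta))\tau_\theta=\bR\tn_\theta\tau_\theta$ into the defining integral $\bS_{ij}=\int_\nu {\bf v}_\theta{\bf v}_\theta^T\,\eta\,d\bN_\theta$. Using $\eta=c_{ij}(1-(\br^T\tn_\theta)^2)$, the integrand factors as $\tau_\theta^2\,\bR\,\bN_\theta(1-\tn_\theta^T\br\br^T\tn_\theta)\bN_\theta\,\bR^T$, so $\bR$ and $\bR^T$ come outside the integral. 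Inside, the key identity $\bN_\theta^q=\bN_\theta$ for every $q\in\mathbb{Z}^+$ (since $\tn_\theta$ is unit) simplifies $\bN_\theta\br\br^T\bN_\theta$ to the appropriate form, and identifying $\bK_j=\int_\nu \tau_\theta^2\bN_\theta\,d\bN_\theta$ yields
\begin{equation*}
\bS_{ij}=c_{ij}\bR\left(\bK_j-\int_\nu \tau_\theta^2 \bN_\theta\br\br^T\bN_\theta\,d\bN_\theta\right)\bR^T.
\end{equation*}

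The main obstacle, and the only nonroutine step, is evaluating the remaining matrix integral. I would dispatch it by integration by parts, choosing $f(\theta)=\tau_\theta^2\bN_\theta$, $g(\theta)=\tfrac{1}{2}\br\br^T\bN_\theta^2$, so that $g'(\theta)=\br\br^T\bN_\theta$ and $f'(\theta)=\tau_\theta^2\mathbf{I}$. The product rule plus $\bN_\theta^2=\bN_\theta$ then turns the integral into $\tfrac{1}{2}(\br\br^T\bK_j+\bK_j\br\br^T)-\tfrac{1}{2}\br\br^T\bK_j=\tfrac{1}{2}\bK_j\br\br^T$. This is the step I expect to require the most care, because one is manipulating matrix-valued antiderivatives on a manifold of rank-one tensors and must justify that the boundary contribution vanishes (either by the domain $\nu$ being a closed manifold, e.g.\ a sphere, or by the antiderivative being single-valued after one full sweep); in the original derivation this is handled implicitly by the footnote on the product rule applied to $\tau_\theta^2\bN_\theta\br\br^T\bN_\theta^2$.

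Substituting back I obtain
\begin{equation*}
\bS_{ij}=c_{ij}\bR\left(\bK_j-\tfrac{1}{2}\bK_j\br\br^T\right)\bR^T,
\end{equation*}
which, after restoring the subscripts on $\br$ and recalling $\bR=\bR_{ij}=\bR_{ij}^T$, is exactly the form claimed in Theorem~\ref{thm:cftv}. I would close by noting that this form makes the (generic) asymmetry explicit: the middle factor $\bK_j(\mathbf{I}-\tfrac{1}{2}\br_{ij}\br_{ij}^T)$ is symmetric only when $\bK_j$ commutes with $\br_{ij}\br_{ij}^T$, which is the launching point for the symmetric-CFTV discussion that follows in the addendum.
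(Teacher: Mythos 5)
Your proposal is correct and follows essentially the same route as the paper: you reduce the restated theorem to the earlier identity $\bS_{ij}=c_{ij}\bR_{ij}\bK_j\bR'_{ij}$ via $\bR_{ij}^T=\bR_{ij}$, and then replay the paper's own derivation verbatim — the osculating-arc substitution ${\bf v}_\theta=\bR\tn_\theta\tau_\theta$, factoring $\bR$ out of the integral, and the same integration by parts with $f(\theta)=\tau_\theta^2\bN_\theta$ and $g(\theta)=\tfrac{1}{2}\br\br^T\bN_\theta^2$ yielding $\tfrac{1}{2}\bK_j\br\br^T$. Your explicit flagging of the boundary/antiderivative subtlety in the matrix-valued integration by parts, and of the generic asymmetry of the middle factor, accurately identifies the exact points the paper itself only handles implicitly (in its footnote) and later revisits in the addendum.
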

To simplify notation we will drop the subscripts in $\bR$ and $\br$, and
let $\bT= \frac{1}{2} \bK_j \br\br^T$.

In~\cite{comment} an example is given: for an input PSD $\bK_j = 
\left[
\begin{array}{cc}
\frac{1}{2} & 0 \\
0 & 1
\end{array}
\right]$, 
the output $\bS$ computed using Theorem~\ref{thm:cftv} 
is asymmetric, much less that $\bS$ is a PSD matrix.  It was also 
pointed out in~\cite{comment} potential technical flaws 
involving the derivative with respect to a unit stick tensor in 
the proof to Theorem~\ref{thm:cftv}, which is related to 
Footnote~3 in~\cite{cftv}.

Note that Theorem~\ref{thm:cftv} does not guarantee the output 
$\bS$ is symmetric or PSD. In the C codes accompanying~\cite{cftv}, which is 
available in the IEEE digital library, the {\tt eig\_sys} function 
behaves like singular value decomposition {\tt svd}.  The statements 
and experiments pertinent to $\bS$ in~\cite{cftv} subsequent to Theorem~\ref{thm:cftv} 
in fact refer to the SVD results on $\bS$, and we apologize for not 
making this explicitly clear in the paper.

Recall the singular value decomposition and the eigen-decomposition 
are related, namely, the left-singular vectors of $\bS$ are eigenvectors 
of $\bS \bS^T$, the right singular-vectors of $\bS$ are eigenvectors of 
$\bS^T \bS$, where the eigenvectors are orthonormal bases.  
We performed sanity check by running extensive simulations in 
dimensions up to 51 and show that 
all eigenvalues of $\bS \bS^T$ and $\bS^T \bS$ are nonnegative,
and that they are PSD.  The symmetricity of $\bS$ is unimportant 
in practice. 

Nonetheless, for theoretical interest we provide in the following an alternative
proof to Theorem~\ref{thm:cftv} which produces a symmetric $\bS$ and
serves to dispel the flaws pointed out in~\cite{comment}.  
Finally we run simulations to show that the original tensor voting, 
the asymmetric CFTV in~\cite{cftv} and the symmetric 
CFTV in the following produce the practically the same results.

\section{Symmetric CFTV}
From the first principle, integrating unit stick tensors $\bN_{\theta} = \tn_{\theta} \tn_{\theta}^T$ 
in all directions $\theta$ with strength $\tau_\theta^2$, we obtain Eq.~(8) in~\cite{cftv}:

\begin{eqnarray}
\label{eqn:cf_inter} \bS_{ij} &=& c_{ij} \mathbf{R} \left(  \mathbf{K}_j - 
\int_{\mathbf{N}_{\theta} \in \nu} \tau_{\theta}^2 \mathbf{N}_{\theta} \mathbf{r} \mathbf{r}^T \mathbf{N}_{\theta} d\mathbf{N}_{\theta} \right) \mathbf{R}^T
\label{eq:eq8}
\end{eqnarray}
Note the similar form of Eqs~(\ref{eq:cftv}) and (\ref{eq:eq8}), and 
the unconventional integration domain in Eq.~(\ref{eq:eq8}) 
where $\nu$ represents the space of stick tensors given
by $\tn_{\theta j}$ as explained in~\cite{cftv}\footnote{The suggestion in~\cite{comment}
was our first attempt and as explained in~\cite{cftv}, it does not have obvious advantage
while making the derivation unnecessarily complicated.}.  Let $\Omega$ be
the integration domain to shorten our notations.
 
Let ${\bT}_{\mathit{sym}}$ be the integration in Eq.~(\ref{eqn:cf_inter}).  
$\bT_{\mathit{sym}}$ can be solved by integration by parts.  Here, we repeat 
Footnote~3 in~\cite{cftv} on the contraction of a sequence of identical unit 
stick tensors when multiplied together.  Let $\bN_{\theta} = \tn_{\theta} \tn_{\theta}^T$ be a unit stick tensor, where $\tn_{\theta}$ is a unit normal at angle $\theta$, and $q \ge 1$ be a positive integer, then
\begin{equation}
\bN_{\theta}^q = \tn_{\theta}  \tn_{\theta}^T  \tn_{\theta}  \tn_{\theta}^T \cdots  \tn_{\theta}  \tn_{\theta}^T =  \tn_{\theta} \cdot 1 \cdot 1 \cdots 1 \cdot  \tn_{\theta}^T =  \tn_{\theta}  \tn_{\theta}^T = \mathbf{N}_{\theta}.
\label{eq:eat}
\end{equation}
To preserve symmetry, we leverage Footnote~3 or Eq.~(\ref{eq:eat}) above but rather than exclusively using contraction, (i.e., $\bN_{\theta}^q = \bN_{\theta}$) as done in~\cite{cftv}, we use expansion (i.e., $\bN_{\theta} = \bN_{\theta}^q$) in the following:
\begin{eqnarray}
\bT_{\mathit{sym}} 
= \int_{\Omega} \tau_{\theta}^2 \mathbf{N}_{\theta} \mathbf{r} \mathbf{r}^T \mathbf{N}_{\theta} d\mathbf{N}_{\theta}
= \int_{\Omega} \tau_{\theta}^2 \mathbf{N}_{\theta}^2 \mathbf{r} \mathbf{r}^T \mathbf{N}_{\theta}^2 d\mathbf{N}_{\theta} 
\end{eqnarray}
Let $f(\theta) = \tau_{\theta}^2 \mathbf{N}_{\theta}^2$, then
$f'(\theta) = 2 \tau_{\theta}^2 \bN_\theta d\bN_\theta = 2 \tau_{\theta}^2 \bN_\theta^2 d\bN_\theta$ after
expansion.  Similarly, let
$g(\theta) = \frac{1}{2} \mathbf{r} \mathbf{r}^T \mathbf{N}_{\theta}^2$ and 
$g'(\theta) = \mathbf{r} \mathbf{r}^T \mathbf{N}_{\theta} d\bN_\theta =
\mathbf{r} \mathbf{r}^T \mathbf{N}_{\theta}^2 d\bN_{\theta}$ after
expansion.  
\footnote{This is in disagreement with the claims about the authors' 
Eq.~(7) in~\cite{comment}, which was in fact never used in their 
intention in our derivation in~\cite{cftv}.}
Note also ${\mathbf K}_j$, in the most general form, can be
expressed as $\int_{\Omega} 
\tau_{\theta}^2 \mathbf{N}_{\theta} d\mathbf{N}_{\theta}$.  So, we obtain
\begin{eqnarray}
\nonumber {\bT}_{\mathit{sym}} &=&\int_{\Omega} \tau_{\theta}^2 \mathbf{N}_{\theta}^2 \mathbf{r} \mathbf{r}^T \mathbf{N}_{\theta}^2 d\mathbf{N}_{\theta} \\
\nonumber &=& \left[ f(\theta) g(\theta) \right]_{\Omega} - \int_{\Omega} f'(\theta) g(\theta) d\mathbf{N}_{\theta} \\
\nonumber &=& \left[ \frac{1}{2} \tau_{\theta}^2 \mathbf{N}_{\theta}^2
\mathbf{r} \mathbf{r}^T \mathbf{N}_{\theta}^2
\right]_{\Omega}
- \int_{\Omega} \tau_{\theta}^2 \bN_\theta^2 \mathbf{r} \mathbf{r}^T \mathbf{N}_{\theta}^2 d\mathbf{N}_{\theta} \\
\nonumber &=&  \left[ \frac{1}{2} \tau_{\theta}^2 \mathbf{N}_{\theta}^2
\mathbf{r} \mathbf{r}^T \mathbf{N}_{\theta}^2
\right]_{\Omega}
- \bT_{\mathit{sym}} \\
\nonumber &=& \frac{1}{4}  \left[ \tau_{\theta}^2 \mathbf{N}_{\theta}^2
\mathbf{r} \mathbf{r}^T \mathbf{N}_{\theta}^2
\right]_{\Omega} \\
\nonumber &=&  \frac{1}{4} \int_{\Omega} \left( \tau_{\theta}^2 \frac{d}{d{\mathbf N}_{\theta}}[{\mathbf N}_{\theta}^2] {\mathbf r} {\mathbf r}^T {\mathbf N}_{\theta}^2
+ \tau_{\theta}^2 {\mathbf N}_{\theta}^2 \frac{d}{d{\mathbf N}_{\theta}}[{\mathbf r} {\mathbf r}^T {\mathbf N}_{\theta}^2] \right) d{\mathbf N}_{\theta} \\
\nonumber
&=&  \frac{1}{4} \int_{\Omega} \left( \tau_{\theta}^2 \frac{d}{d{\mathbf N}_{\theta}}[{\mathbf N}_{\theta}] {\mathbf r} {\mathbf r}^T {\mathbf N}_{\theta} + \tau_{\theta}^2 {\mathbf N}_{\theta} \frac{d}{d{\mathbf N}_{\theta}}[{\mathbf r} {\mathbf r}^T {\mathbf N}_{\theta}] \right) d{\mathbf N}_{\theta} \\
\\
\label{eq:contract} 
\nonumber &=&  \frac{1}{4}   \int_{\Omega} \left(  \br \br^T \tau_\theta^2 \bN_\theta + \tau_\theta^2 \bN_\theta \br \br^T \right)  d{\mathbf N}_{\theta} \\
\label{eq:symcftv} &=& \frac{1}{4} \left( \br \br^T \bK_j + \bK_j \br \br^T \right).
\end{eqnarray}
Here, in the derivative with respect to a unit stick tensor $\bN_\theta$ 
along the tensor direction, the tensor magnitude $\tau_\theta^2$ can 
be legitimately regarded as a constant\footnote{Here is another disagreement 
with~\cite{comment}: the tensor direction and tensor magnitude are two entirely 
different entities.}. 

Note that we apply contraction in Eq.~(\ref{eq:contract}).  Comparing the
pertinent $\bT$ in the asymmetric CFTV in Eq.~(\ref{eq:cftv}) and symmetric CFTV in Eq.~(\ref{eq:symcftv}):
\begin{eqnarray*}
\bT_{\mathit{asym}} &=& \frac{1}{2} \bK_j \br \br^T \\ \nonumber  
\bT_{\mathit{sym}} &=& \frac{1}{4} \left( \br \br^T \bK_j + \bK_j \br \br^T \right)
\end{eqnarray*}
it is interesting to observe how tensor contraction and expansion when applied as
described can preserve tensor symmetry whereas in~\cite{cftv}, only tensor 
contraction was applied. 

Notwithstanding, the symmetricity of $\bS$ is unimportant in practice as 
singular value decomposition will be applied to $\bS$ before the 
eigenvalues and eigenvectors are used.  The following section reports 
the results of our experiments on the asymmetric and symmetric CFTV 
when used in practice. 

\section{Experiments}

In each of the following simulations, a random input PSD matrix 
$\bK = \bA \bA^T + \epsilon \bI$ is generated, where $\bA$ is 
a random square matrix, $\bI$ is an identity, $\epsilon$ is a 
small constant (e.g. $1e^{-2})$. 

\begin{enumerate}
\item  (Sanity check)
$N$D simulations ($N = 2$ to $51$) of $1000$ tensors for each dimension
\begin{enumerate}
\item The $\bS$ produced by symmetric CFTV is indeed symmetric. This is confirmed
by testing each $\bS$ using ${\mathit norm}(\bS^{-1} \bS^T-\bI) = 0$, 
where ${\mathit norm}(\cdot)$ is the $L_2$ norm of a matrix. 
\item $\bS^T \bS$ and $\bS \bS^T$ are PSD where $\bS$ can be produced by asymmetric or
symmetric CFTV.  This is validated by checking all eigenvalues being nonnegative.
\end{enumerate}

\item 
2D simulations of more than 1 million tensors show the practical equivalence 
in tensor direction among
\begin{enumerate}
\item discrete solution to original tensor voting Eq.~(\ref{eq:eq8}), or Eq.~(8) in~\cite{cftv},
\item $\bS^T \bS$ and $\bS \bS^T$ produced by asymmetric CFTV,
\item $\bS^T \bS$ ($ = \bS \bS^T$ when $\bS$ is symmetric) produced by symmetric CFTV,
\end{enumerate}
while relative tensor saliency is preserved.
\label{eqtest}

\item 
$N$D simulations ($N = 2$ to $51$) of $1000$ tensors for each dimension 
show the practical equivalence in tensor direction among
\begin{enumerate}
\item $\bS^T \bS$ and $\bS \bS^T$ produced by asymmetric CFTV,
\item $\bS^T \bS$ ($ = \bS \bS^T$) produced by symmetric CFTV,
\end{enumerate}
in their largest eigenvectors which encompass the most ``energy" while the rest
represents uncertainty in orientation each spanning a plane perpendicular to 
the largest eigenvectors.
\label{eqtest3}

\end{enumerate}

For simulations in (\ref{eqtest}), we exclude ball tensors from our tabulation 
for the obvious reason: any two orthonormal vectors describe the equivalent 
unit ball tensor.  The mean and maximum deviation in tensor
direction are respectively 0.9709 and 0.9537 (score for perfect alignment is 1) in terms 
of the dot product among the relevant eigenvectors.  
The deviation can be explained by the imperfect uniform sampling for a 2D ellipse used in 
computing the discrete tensor voting solution: there is no good way
for uniform sampling in $N$ dimensions\footnote{While uniform sampling on a 2D circle
is trivial, uniform sampling on a 2D ellipse is not straightforward.  
For a 3D {\em sphere}, recursive subdivision of an icosahedron is 
a good approximation but no good approximation for a 3D ellipsoid exists.}. 
When
we turned off the discrete tensor voting but compared only the asymmetric
and symmetric CFTV, the mean and maximum deviation in tensor direction 
are respectively improved to 0.9857 and 0.9718.   

For tensor saliency, we found that while the normalized saliencies are 
not identical among the four versions of tensor voting~\footnote{The 
tensor saliency produced by discrete simulation of Eq.~(\ref{eq:eq8}) 
is proportional to the number of samples, thus we normalize the eigenvalues
such that the smallest eigenvalue is 1.}, their relative order is preserved.  
That is, when we sort the eigenvectors according to their corresponding 
eigenvalues in the respective implementation of tensor voting, the 
sorted eigenvectors are always empirically identical among the four 
cases.

For simulations in (\ref{eqtest3}), we did not compare the discrete solution: 
in higher dimensions, uniform sampling of an $N$D ellipsoid is an issue to 
discrete tensor voting.  The mean and maximum deviation among the largest
eigenvector of the three versions are respectively 0.9940 and 0.9857. 
 
\section{Epilogue}
We revisit the main result in~\cite{cftv}, and reconfirm the efficacy of the 
closed-form solution to tensor voting which votes for the 
most likely connection without discrete voting fields, which is 
particularly relevant in high dimensional inference where uniform
sampling and storage of discrete voting fields are issues.  As an aside,
we prove the symmetric version of CFTV.

The application of tensor contraction and expansion given by Eq.~(\ref{eq:eat}) 
is instrumental to the derivation of the closed-form solution.  Interestingly,
while $\bS$ may be asymmetric, pre-multiplying or post-multiplying by itself 
not only echos the contraction/expansion operation given by Eq.~(\ref{eq:eat}) but also
produces a PSD system that agrees with the original tensor voting result.  
As shown above, the inherent flexibility also makes symmetric CFTV possible. 
Thus we believe further exploration may lead to useful and interesting 
theoretical results on tensor voting.
\ifCLASSOPTIONcaptionsoff
  \newpage
\fi



%

{\small

}

\end{document}